\documentclass{article}

\usepackage[preprint]{style}

\usepackage[T1]{fontenc}
\usepackage[utf8]{inputenc}
\usepackage{microtype}

\usepackage{amsmath,amssymb,amsfonts}
\usepackage{amsthm}
\usepackage{booktabs}
\usepackage[final]{graphicx}
\usepackage{xcolor}
\usepackage{algorithm}
\usepackage{algpseudocode}
\usepackage{caption}
\usepackage{subcaption}
\usepackage{wrapfig}
\usepackage{mdframed}
\usepackage[colorlinks=true,allcolors=blue]{hyperref}
\usepackage[capitalise,noabbrev]{cleveref}

\numberwithin{equation}{section}
\numberwithin{figure}{section}
\numberwithin{table}{section}

\theoremstyle{plain}
\newtheorem{theorem}{Theorem}[section]
\newtheorem{lemma}{Lemma}[section]
\theoremstyle{remark}

\newtheoremstyle{boxedplain}%
  {0pt}%
  {0pt}%
  {\itshape}%
  {}%
  {\normalfont}%
  {.}%
  {0.5em}%
  {}%
\theoremstyle{boxedplain}
\newtheorem{boxedtheorem}[theorem]{Theorem}
\theoremstyle{remark}

\newmdenv[
  linewidth=0.45pt,
  linecolor=black!38,
  backgroundcolor=black!2,
  roundcorner=4pt,
  innerleftmargin=10pt,
  innerrightmargin=10pt,
  innertopmargin=8pt,
  innerbottommargin=8pt,
  skipabove=10pt,
  skipbelow=10pt
]{theorembox}

\newcommand{\erank}{\operatorname{erank}}
\newcommand{\tr}{\operatorname{tr}}
\newcommand{\diag}{\operatorname{diag}}
\newcommand{\Rbb}{\mathbb{R}}
\newcommand{\ip}[2]{\left\langle #1, #2 \right\rangle}
\newcommand{\Fnorm}[1]{\left\|#1\right\|_F}
\newcommand{\Snorm}[1]{\left\|#1\right\|_2}
\newcommand{\Nnorm}[1]{\left\|#1\right\|_*}

\title{ORCA: The Annealed Spectral Conditioning\\
Optimizer for Faster, Better LLM Training}

\author{%
  {\small
    Yuanshi Liu\textsuperscript{1}\thanks{This work was conducted during the author’s internship at Kling Team.}
    \hspace{0.34em}Boyuan Jiang\textsuperscript{2}
    \hspace{0.34em}Liang Hou\textsuperscript{2}
    \hspace{0.34em}Xin Tao\textsuperscript{2}
    \hspace{0.34em}Pengfei Wan\textsuperscript{2}
    \hspace{0.34em}Zhouchen Lin\textsuperscript{1}
    \hspace{0.34em}Cong Fang\textsuperscript{1}
  }
  \\[2pt]
  \normalfont\small\itshape
  \textsuperscript{1}School of Intelligence Science and Technology, Peking University
  \quad \textsuperscript{2}Kling Team
}

\begin{document}

\maketitle

\begin{abstract}
  Modern LLM optimizers such as Muon often produce weight matrices with higher effective rank than Adam, yet further spectral control has delivered only modest gains. We identify a tension behind this result: concentrated spectra can suppress gradient directions in coupled weight matrices and slow optimization, while constraints maintained throughout training can limit task-specific adaptation and raise the attainable loss floor. We introduce ORCA (Orthogonal Regularization, Cooled After), a minimal optimizer intervention that applies strong but temporary soft orthogonality regularization early in training, then removes it. This allows the weights to benefit from a broader spectrum early on and adapt freely afterward.
  Across LLaMA, Qwen3, and fine-grained mixture-of-experts models ranging from 130M to 8B parameters, ORCA achieves lower final validation loss than Muon. Its loss reduction relative to Muon matches or exceeds Muon's reduction relative to Adam. Ablations support the early-shaping, later-release design. Further, ORCA requires no architectural changes and adds minimal overhead. An implementation is available at \url{https://github.com/uuurraaa/ORCA_Optimizer}.

\end{abstract}

\section{Introduction}
\label{sec:intro}

Recent matrix-based optimizers such as Muon have shown substantial improvements over Adam in LLM training \citep{liu2025muon, jordan2024muon}. A striking difference emerges in their weight trajectories: compared with Adam, Muon maintains a markedly flatter singular-value spectrum and a substantially higher effective rank over most of training, where effective rank is measured as $\|\mathbf{W} \|^2_F / \| \mathbf{W}\|^2_2$. The two optimizers start from comparable values, but Adam exhibits a much sharper decline early in training and remains consistently below Muon thereafter. This observation suggests a natural hypothesis: better preserving the effective dimensionality of weight matrices may be an important ingredient behind faster optimization.

This emerging importance of spectral structure has motivated an important and growing line of work that explicitly preserves or promotes favorable spectral structure during training \citep{qiu2026reparameterized, shi2026pion, dong2026muon}. These approaches introduce principled mechanisms for controlling the weight spectrum and have demonstrated promising improvements in LLM optimization. If a higher effective rank is indeed beneficial, however, one might expect stronger spectral control to yield correspondingly larger gains. In practice, the improvements over Muon are often more modest than this intuition would suggest, despite substantially stronger control over the weight spectrum.

To understand this gap, we examine two competing effects of spectral control that arise simultaneously. We begin with a matrix factorization model \citep{gunasekar2017implicit} that abstracts the multiplicative coupling common in Transformer architectures. On the benefit side, we illustrate that a well-shaped spectrum allows more gradient directions to contribute to other parameter updates, facilitating faster optimization. On the cost side, however, imposing spectral constraints limits model expressiveness and can raise the attainable loss floor. Correspondingly, controlled interventions during Llama training empirically support both effects. Together, these results point to a seemingly trade-off between faster optimization and model expressiveness.

However, we argue that this trade-off is not fundamental. Spectral regularization may be most valuable early, when it can shape the optimization trajectory, yet become a liability if maintained as the weights adapt to the task. The two effects need not be accepted together: we can use the constraint to shape the spectrum, then release it before it restricts the final solution.
This motivates a \textbf{minimal} method ORCA (Orthogonal Regularization, Cooled After): During an initial phase, ORCA applies a soft orthogonality penalty, \(R_{\mathrm{orth}}(\mathbf W)=\frac14\|\mathbf W\mathbf W^\top-\mathbf I\|_F^2\). Then directly removes it at a single switch point. Training thereafter follows the standard optimizer update, with no gradual decay or further changes. This design lets early regularization influence the optimization path without prescribing the spectrum throughout training. In our experiments, this simple release is followed by faster loss reduction and a lower final validation loss than Muon.

\begin{wrapfigure}[20]{r}{0.43\linewidth}
  \vspace{-0.8\baselineskip}
  \centering
  \includegraphics[width=\linewidth]{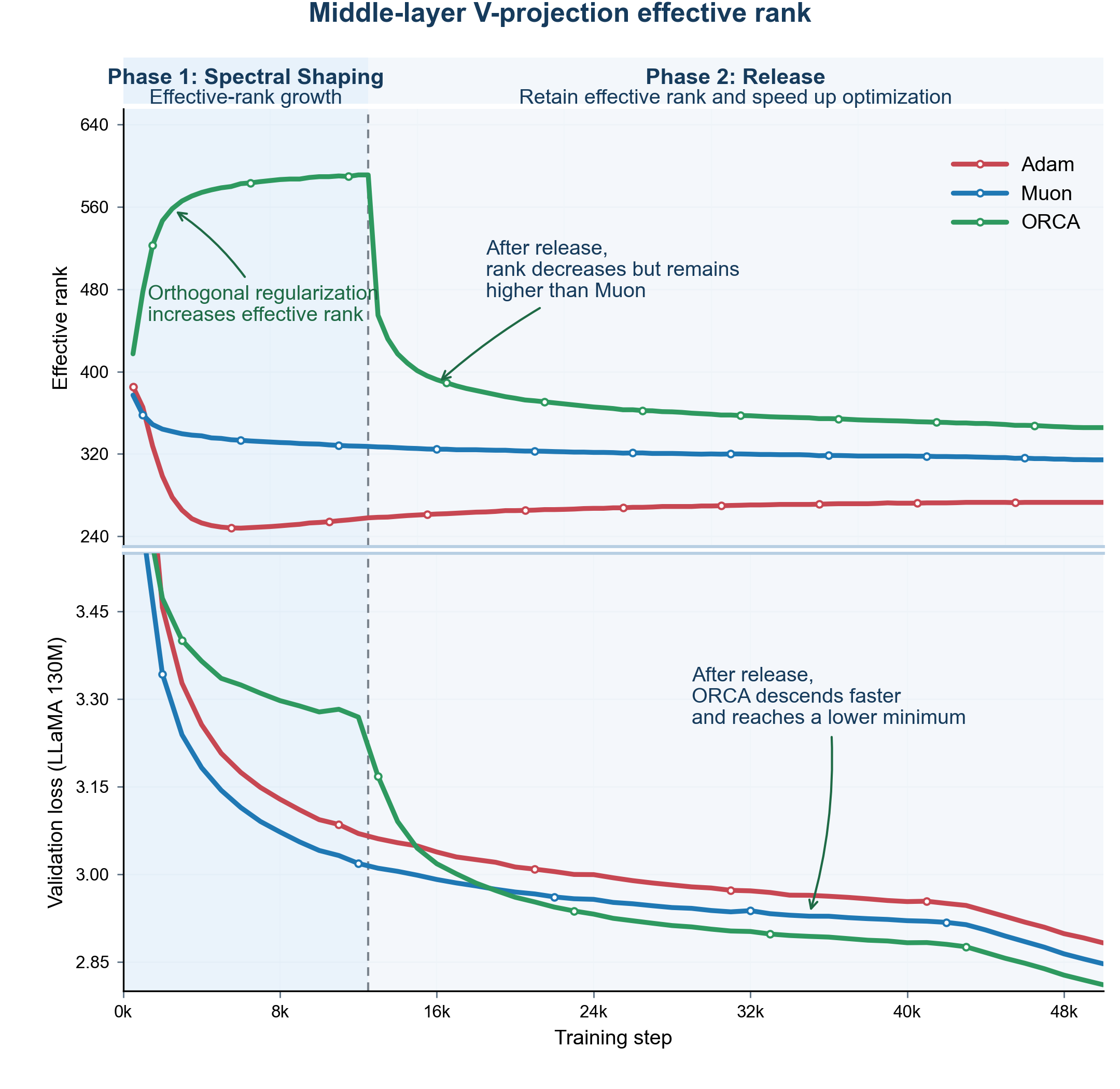}
  \captionsetup{skip=2pt}
  \caption{LLaMA 130M training dynamics. Effective rank is measured on a
  middle-layer $V$ projection; the dashed line marks regularization release.}
  \label{fig:intro-rank-loss}
\end{wrapfigure}

Figure~\ref{fig:intro-rank-loss} illustrates the transition. While the penalty is active, the effective rank of the weight matrix rises sharply above Muon's, although ORCA initially has a higher validation loss. After release, the rank falls from its peak but remains above Muon's, while validation loss decreases rapidly, crosses the Muon curve. The result is consistent with early spectral shaping leaving a useful effect on the subsequent trajectory, even after the explicit penalty is gone. Controlled interventions in Section~\ref{sec:interventions} further investigates into this connection.

We evaluate ORCA on LLaMA, Qwen3, and fine-grained MoE models ranging from 130M to 8B parameters, including 100B-token training runs. ORCA achieves lower final validation loss than Muon across these settings. In most experiments, the reduction is 0.02–0.04, comparable to or greater than Muon’s gain over Adam. To put this loss reduction in perspective, on LLaMA 1.3B, ORCA achieving the 0.024 lower validation loss reaches Muon’s 100k-step loss in just 60k steps. Further, downstream evaluations, alternative early spectral interventions, multi-seed runs, and ablations of method parameters and intervention timing further support these gains. Together, these results show that early spectral shaping followed by constraint relaxation can improve training efficiency and final performance across architectures without substantial changes to the optimizer.

To summarize the contributions of the paper: (1) We illustrate a trade-off in spectral control through a simplified model and empirical observations: favorable spectra can accelerate optimization but persistent constraints may harm the attainable solution quality. (2) We introduce ORCA, a minimal shape-then-release optimizer that leverages early orthogonal regularization to improve optimization trajectories while allowing unconstrained adaptation in later training. (3) We demonstrate ORCA's effectiveness across diverse LLM architectures and scales, achieving faster convergence and lower loss than existing optimizers. Through extensive controlled experiments, we further validate the mechanism that early effective-rank shaping drives the observed improvements.

\section{Spectral Structure and Optimization}

As noted in Section 1, Muon’s higher effective rank coincides with faster optimization. This observation suggests that properties of the weight spectrum may be useful for optimization, but whether they provide a general benefit remains unclear. To turn this still-vague possibility into a precise question about optimization dynamics, we ask:
\begin{center}
\fbox{
\parbox{0.92\linewidth}{
\centering
\textbf{Q1.}
\emph{Can the spectral structure observed under Muon directly affect
optimization speed, and through what mechanism?}
}}
\end{center}

A direct mechanism already appears in 'the multiplicatively coupled weight
matrices' of Transformers. Consider for example the value--output pair $\mathbf W_V\mathbf W_O$ of a single attention
head:
\begin{equation}
\operatorname{Attn}(\mathbf H)
=\operatorname{softmax}\!\left(
\frac{\mathbf H\mathbf W_Q\mathbf W_K^\top\mathbf H^\top}{\sqrt{d_h}}
\right)\mathbf H\mathbf W_V\mathbf W_O .
\end{equation}
The gradient of the value projection takes the form
\[
\nabla_{\mathbf W_V}L
=
\mathbf G\mathbf W_O^\top,
\quad \text{with} \quad
\mathbf G=\nabla_{(\mathbf W_V\mathbf W_O)}L .
\]
Here, $\mathbf G$ denotes the gradient with respect to the entire product
$\mathbf W_V\mathbf W_O$.
Thus, $\mathbf W_O$ acts as a spectral filter on the gradient received by
$\mathbf W_V$. Writing
$\mathbf W_O=\sum_i \sigma_i\mathbf u_i\mathbf v_i^\top$, we have
$\nabla_{\mathbf W_V}L
=\sum_i \sigma_i(\mathbf G\mathbf v_i)\mathbf u_i^\top$.
This makes the role of the spectrum explicit: each component
$\mathbf G\mathbf v_i$ enters the gradient with weight $\sigma_i$.
If $\mathbf W_O$ is scaled orthogonal,
$\mathbf G\mathbf W_O^\top$ preserves the spectrum of $\mathbf G$ up to a
common scale. At the opposite extreme, if $\mathbf W_O$ has rank $r$,
the update is restricted to contributions from $r$ spectral directions.
More generally, as the spectrum becomes more concentrated, directions
associated with smaller singular values contribute proportionally less to
the update.

In fact, this partner-dependent filtering is ubiquitous in Transformers:
analogous coupling appears between the query--key pair
$(\mathbf W_Q,\mathbf W_K)$ and the up- and down-projections of MLP and expert
blocks.

To isolate this mechanism from the surrounding Transformer computation, consider the generic factorization $\mathbf W=\mathbf X\mathbf Y$, where $\mathbf G=\nabla_{\mathbf W}L(\mathbf W)$.
The simple chain rule gives
\(
\nabla_{\mathbf X}L
=\mathbf G\mathbf Y^\top,
\nabla_{\mathbf Y}L
=\mathbf X^\top\mathbf G,
\)
so each factor directly filters the gradient received by its partner. Under the
continuous-time exact-polar Muon dynamics \citep{ambrosio2005gradient,peyre2026muon},
\[
\dot{\mathbf X}
=-\eta_t^X\operatorname{Polar}(\mathbf G\mathbf Y^\top),
\qquad
\dot{\mathbf Y}
=-\eta_t^Y\operatorname{Polar}(\mathbf X^\top\mathbf G),
\]
where $\operatorname{Polar}(\mathbf M)=\mathbf U\mathbf V^\top$
for an SVD $\mathbf M=\mathbf U\mathbf\Sigma\mathbf V^\top$.
The instantaneous loss decrease is therefore
\[
-\dot L
=\eta_t^X\Nnorm{\mathbf G\mathbf Y^\top}
+
\eta_t^Y\Nnorm{\mathbf X^\top\mathbf G} .
\]
Thus, the same filtered gradients above are the quantities that determine Muon's instantaneous loss decrease. The local
spectral effect can therefore translate into a local effect on optimization speed.

To empirically demonstrate this effect, we conduct a reciprocal $V$--$O$ intervention.
Let $\mathbf G_O=\nabla_{\mathbf W_O}L$ denote the gradient with respect to
$\mathbf W_O$, and let $\mathbf U_O=\operatorname{Polar}(\mathbf G_O)$ denote
the corresponding Muon update direction.
At a 10k-step checkpoint, we raise the smaller singular values of $\mathbf W_V$
toward its largest singular value while keeping its singular vectors and spectral
norm fixed, and update only $\mathbf W_O$ with Muon.
As shown in \Cref{fig:v-rank-o-metrics}, with the Muon update scale
$\|\mathbf U_O\|_F$ held fixed, increasing the effective rank of $\mathbf W_V$
increases $\|\mathbf G_O\|_*$---the loss decrease under Muon
flow---while decreasing the gradient--weight alignment
$\cos(\mathbf G_O,\mathbf W_O)$.
These all indicate that the same-sized Muon update has a stronger effect as the effective rank of
$\mathbf W_V$ increases.
Consistently, \Cref{fig:v-rank-o-surface} also shows that updating $\mathbf W_O$
yields a larger held-out loss decrease at higher effective rank of $\mathbf W_V$.

\begin{figure}[htb]
  \centering
  \captionsetup{font=small,skip=4pt}
  \begin{subfigure}[t]{0.32\linewidth}
    \centering
    \includegraphics[width=\linewidth]{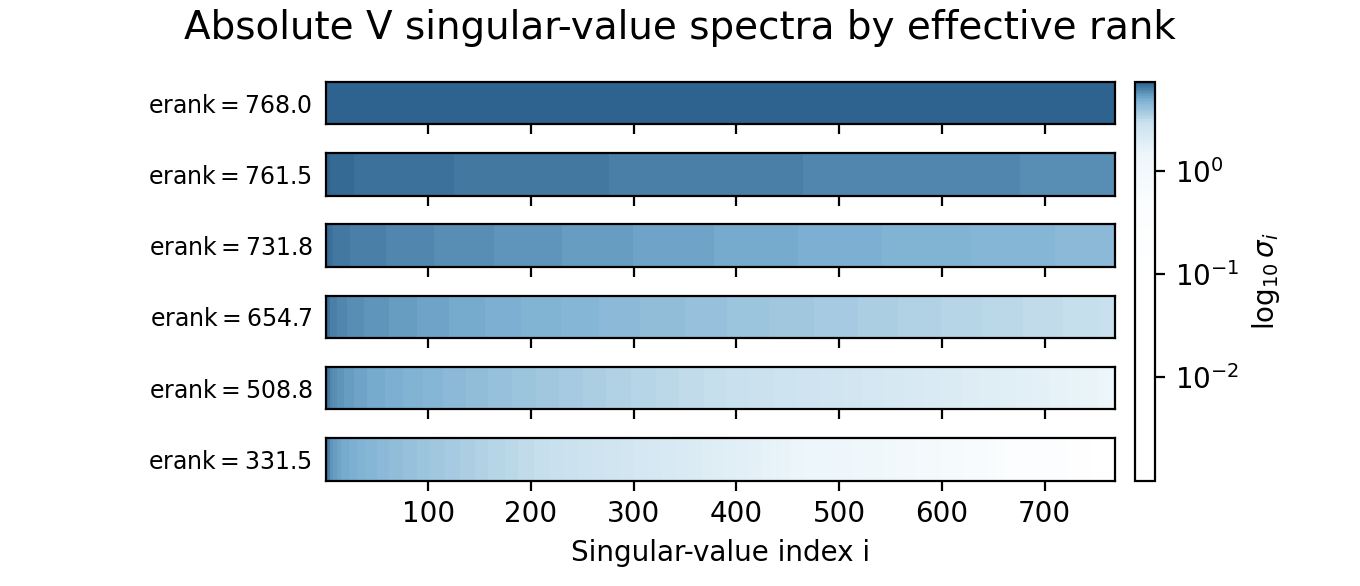}
    \caption{Absolute singular-value spectra.}
    \label{fig:v-rank-o-spectrum}
  \end{subfigure}\hfill
  \begin{subfigure}[t]{0.32\linewidth}
    \centering
    \includegraphics[width=\linewidth]{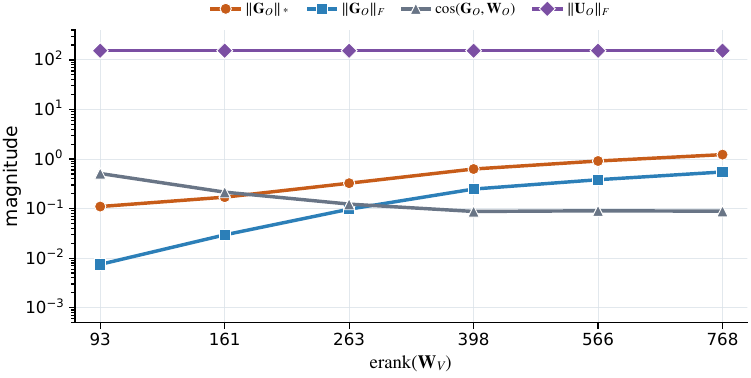}
    \caption{Gradient and update metrics.}
    \label{fig:v-rank-o-metrics}
  \end{subfigure}\hfill
  \begin{subfigure}[t]{0.32\linewidth}
    \centering
    \includegraphics[trim=0 0 0 12,clip,width=\linewidth]{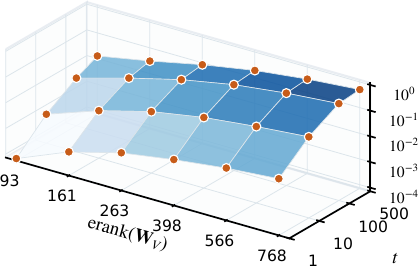}
    \caption{Held-out loss decrease.}
    \label{fig:v-rank-o-surface}
  \end{subfigure}
  \captionsetup{skip=2pt}
  \caption{Single-layer $V$-spectrum intervention with $O$-only Muon updates.
  $\mathbf G_O$ and $\mathbf U_O$ denote the gradient and Muon update direction.
  In (c), rollout steps and loss decrease use logarithmic scales.}
  \label{fig:v-rank-o-diagnostics}
\end{figure}

These observations suggest a simple principle: spectral concentration can create an optimization bottleneck. For a nonzero matrix $\mathbf A$, define its effective rank by $\erank(\mathbf A):=\Fnorm{\mathbf A}^2/\Snorm{\mathbf A}^2$.
To make this intuition concrete, we study a matrix-factorization model. The following bounds illustrate one possible mathematical pathway through which spectral structure may affect optimization.
\vspace{-2pt}
\begin{theorembox}
\begin{boxedtheorem}[Low Effective Rank Limits Descent Speed]
\label{thm:mechanistic}
For $k\ge2$, let $\mathbf X$ and $\mathbf Y$ be conformable matrices with
$\mathbf W=\mathbf X\mathbf Y\in\mathbb R^{k\times k}$, and consider the
column-wise softmax loss
\begin{equation}
 L(\mathbf W)
 =\sum_{i=1}^{k}
 \left[
 \log\!\left(\sum_{j=1}^{k}e^{\mathbf W_{ji}}\right)
 -\mathbf W_{c_i i}
 \right],
\quad c_i\in[k], \quad \text{with} \quad \mathbf{W} = \mathbf{X}\mathbf{Y}
\label{eq:objective}
\end{equation}
let $(\mathbf X_t,\mathbf Y_t)_{t\in[0,T]}$ be an absolutely continuous solution
of the exact-polar Muon flow above, with $\eta_0\ge0$ and
\(
0\le\eta_t^X\le\eta_0/\Snorm{\mathbf Y_t},
\qquad
0\le\eta_t^Y\le\eta_0/\Snorm{\mathbf X_t}.
\)
Assume that the factor norms satisfy
\(
\Snorm{\mathbf X_t},\Snorm{\mathbf Y_t}>0
\)
for all $t\in[0,T]$.
Writing $L_t=L(\mathbf X_t\mathbf Y_t)$, we have
\[
L_T
\ge
 L_0\exp\!\left[
-\sqrt{2}\,\eta_0
\int_0^T
\left(
\sqrt{\erank(\mathbf X_s)}
+
\sqrt{\erank(\mathbf Y_s)}
\right)ds
\right].
\]
\end{boxedtheorem}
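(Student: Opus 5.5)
Since $L_t>0$, the plan is to bound the logarithmic derivative: show $-\dot L_t \le \sqrt2\,\eta_0\bigl(\sqrt{\erank(\mathbf X_t)}+\sqrt{\erank(\mathbf Y_t)}\bigr)L_t$ for a.e.\ $t$ and then integrate (Grönwall). Along the absolutely continuous trajectory the chain rule holds a.e., so the identity $-\dot L=\eta_t^X\Nnorm{\mathbf G\mathbf Y^\top}+\eta_t^Y\Nnorm{\mathbf X^\top\mathbf G}$ derived above is available. Integrating $\frac{d}{dt}\log L_t\ge -\sqrt2\eta_0(\cdots)$ from $0$ to $T$ then gives the stated bound. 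Absolute continuity of $\log L_t$ follows because $L$ is smooth and bounded away from zero on the compact trajectory.

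The main step is bounding each nuclear-norm term by a Frobenius term times an effective-rank factor. For the first term I will use Hölder for Schatten norms, $\Nnorm{\mathbf G\mathbf Y^\top}\le \Fnorm{\mathbf G}\,\Fnorm{\mathbf Y}$. Writing $\Fnorm{\mathbf Y}=\Snorm{\mathbf Y}\sqrt{\erank(\mathbf Y)}$ and using $\eta_t^X\le\eta_0/\Snorm{\mathbf Y_t}$ gives $\eta_t^X\Nnorm{\mathbf G\mathbf Y^\top}\le\eta_0\sqrt{\erank(\mathbf Y_t)}\,\Fnorm{\mathbf G}$. The $\mathbf X$ term is handled symmetrically. (I need Hölder in the form $\Nnorm{\mathbf A\mathbf B}\le\Fnorm{\mathbf A}\Fnorm{\mathbf B}$, which is the Cauchy--Schwarz case of von Neumann's trace inequality.) This reduces everything to the loss-specific estimate $\Fnorm{\mathbf G}\le\sqrt2\,L(\mathbf W)$.

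That estimate is the crux and the expected obstacle. Column-wise, $\mathbf G_{:,i}=p^{(i)}-e_{c_i}$, where $p^{(i)}=\mathrm{softmax}(\mathbf W_{:,i})$ and $\ell_i=-\log p^{(i)}_{c_i}$. Then $\|p^{(i)}-e_{c_i}\|_2^2=(1-p_{c_i})^2+\sum_{j\ne c_i}p_j^2\le(1-p_{c_i})^2+(1-p_{c_i})^2=2(1-p_{c_i})^2$. Moreover $1-p_{c_i}\le-\log p_{c_i}=\ell_i$, so $\|\mathbf G_{:,i}\|_2\le\sqrt2\,\ell_i$. Summing, $\Fnorm{\mathbf G}=\bigl(\sum_i\|\mathbf G_{:,i}\|^2\bigr)^{1/2}\le\sum_i\|\mathbf G_{:,i}\|_2\le\sqrt2\sum_i\ell_i=\sqrt2 L$. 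The only care needed is that the inequality $\sum_{j\ne c}p_j^2\le(\sum_{j\ne c}p_j)^2$ is used in the right direction and that $L>0$ strictly, which holds since every softmax probability is below $1$.

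Combining the pieces gives $-\dot L_t\le\sqrt2\,\eta_0\bigl(\sqrt{\erank(\mathbf X_t)}+\sqrt{\erank(\mathbf Y_t)}\bigr)L_t$, and integration finishes the proof. The hypothesis $\Snorm{\mathbf X_t},\Snorm{\mathbf Y_t}>0$ is used only so that $\erank$ and the step-size bounds are well defined. The assumption $k\ge2$ is not essential, but it keeps the loss nontrivial.
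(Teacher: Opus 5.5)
Your proposal is correct and matches the paper's proof step for step. Both use the descent identity from the polar direction and the bound $\Nnorm{\mathbf A\mathbf B}\le\Fnorm{\mathbf A}\Fnorm{\mathbf B}$ with the step-size caps to obtain the $\sqrt{\erank}$ factors, then the column-wise estimate $\Fnorm{\mathbf G}\le\sqrt2\,L$, then integration of $\frac{d}{dt}\log L_t$. Your remark that $\log L_t$ is absolutely continuous because $L$ is bounded away from zero on the compact trajectory is a small justification the paper leaves implicit.
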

\end{theorembox}

\begin{wrapfigure}{r}{0.34\linewidth}
  \centering
  \captionsetup{font=footnotesize,skip=3pt}
  \includegraphics[width=\linewidth]{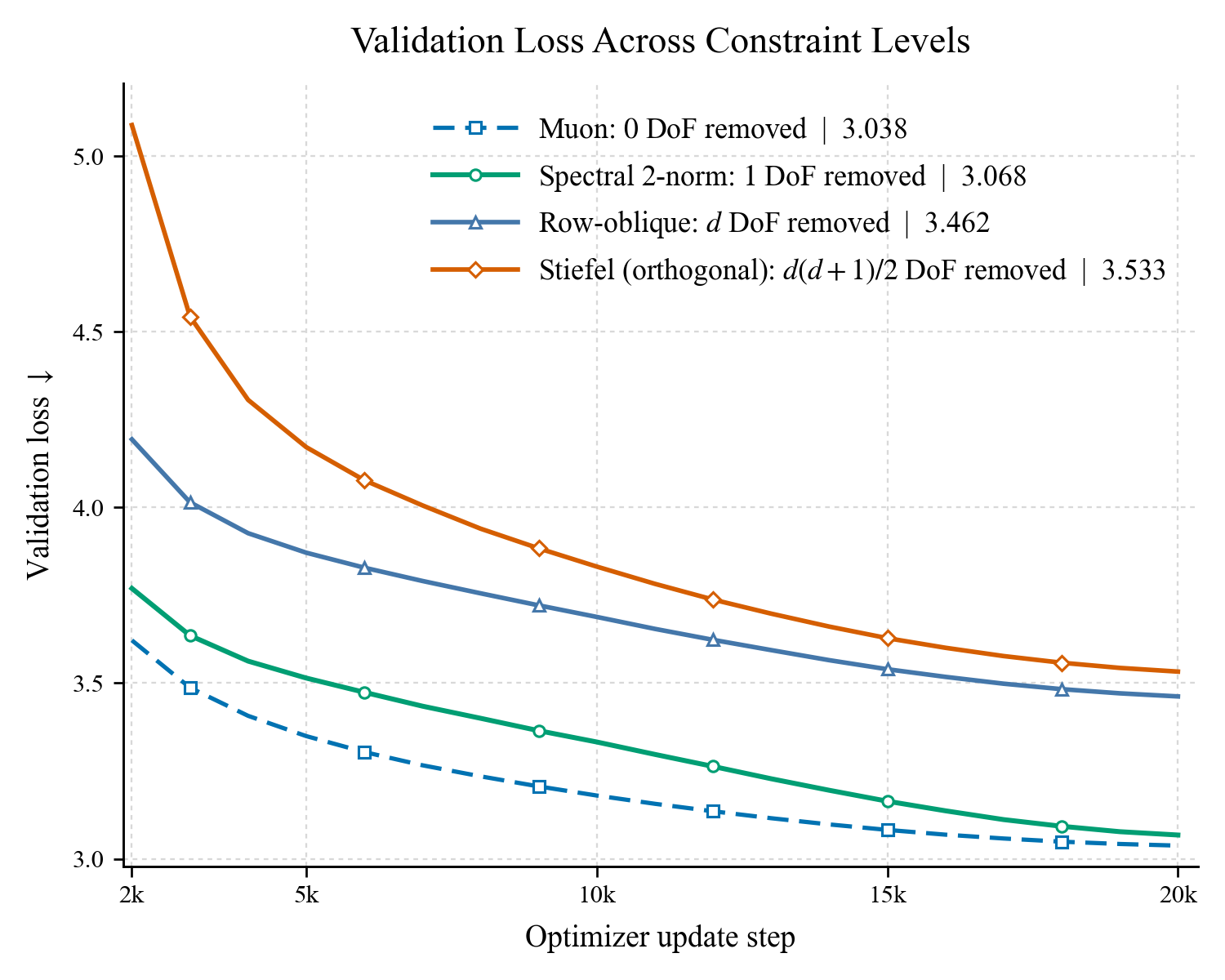}
  \captionsetup{skip=2pt}
  \caption{Validation loss under persistent spectral constraints.}
  \label{fig:constraint_levels}
\end{wrapfigure}
\vspace{-2pt}
In this factorization model, the bound relates the rate of loss decrease to the effective ranks of the factors. Naturally, broad spectral support may alleviate this bottleneck, as demonstrated by the intervention experiments in \Cref{fig:v-rank-o-diagnostics}. We therefore test whether maintaining such support throughout LLaMA pretraining helps by imposing progressively stronger spectral constraints. However, \Cref{fig:constraint_levels} shows that persistent orthogonal constraints worsen validation loss.
Thus, the path-side advantage identified above does not translate into a benefit
when the same structure is enforced persistently. This leads to our second
question:
\vspace{-2pt}
\begin{center}
\fbox{
\parbox{0.92\linewidth}{
\centering
\textbf{Q2.}
\emph{If broad spectral structure alleviates an optimization bottleneck,
why enforcing it hurts performance?}
}}
\end{center}
\makeatletter
\par
\WFclear
\ifdim\linewidth<\hsize
  \advance\linewidth\WF@adjlw
  \advance\@totalleftmargin-\WF@adjtlm
\fi
\makeatother
\vspace{-3pt}
The key is that a property that is favorable along the optimization trajectory
need not be favorable when imposed as a persistent constraint.
Spectral control does more than influence how an optimizer moves: when enforced
throughout training, it also restricts the spectral structure of the solutions
that the optimizer is allowed to reach.
For example, an orthogonality constraint
\(
\mathbf{W}\mathbf{W}^\top\approx\mathbf{I}
\)
forces the singular values of $\mathbf{W}$ to remain nearly equal.
If the task prefers a more anisotropic or lower-effective-rank solution, such a
constraint can prevent the optimizer from reaching it.

The factorized model above makes this competing effect concrete.
For the same column-wise softmax objective in
Eq.~\eqref{eq:objective},
the following theorem shows that sufficiently high effective rank can itself
impose a nonzero loss floor.

\begin{theorembox}
\begin{boxedtheorem}[High Effective Rank Can Impose a Loss Floor]
\label{thm:endpoint}
Assume $k\ge2$, $\mathbf W\neq0$, and
\(\|\mathbf W\|_2\le B\) for a fixed scale bound $B=O(1)$. Let
\(
K
=
\left|
\{c_i:i\in[k]\}
\right|
\)
denote the number of distinct target indices. Then
\[
L(\mathbf{W})
\ge
C_k(\mathbf{W})\,k
\left(
1-
\sqrt{
\frac{K}{\erank(\mathbf{W})}
}
\right),
\]
where $C_k(\mathbf{W})>0$ is the scale factor determined by $k$ and
$\|\mathbf{W}\|_F/\sqrt{k}$. The rank factor
$1-\sqrt{K/\erank(\mathbf{W})}$ increases with effective rank and becomes
positive when $\erank(\mathbf{W})>K$.
\end{boxedtheorem}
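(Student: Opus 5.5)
The plan is to lower-bound each column's cross-entropy by a convex function of a single \emph{margin} variable. The average margin is then a linear functional $\langle \mathbf W,\mathbf Q\rangle$ against a fixed matrix $\mathbf Q$ of rank at most $K$. We bound this functional by spectral-norm/nuclear-norm duality, which is where $\erank(\mathbf W)$ enters.

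\textbf{Step 1 (per-column reduction).} Write $w_i$ for the $i$-th column of $\mathbf W$ and $c=c_i$. Split the log-sum-exp as $e^{w_c}+\sum_{j\ne c}e^{w_j}$ and apply Jensen to the $k-1$ non-target terms. This gives
\[
\ell_i:=\log\sum_j e^{\mathbf W_{ji}}-\mathbf W_{c_i i}\ \ge\ g(\delta_i),\qquad g(\delta)=\log\!\bigl(1+(k-1)e^{-\delta}\bigr),
\]
where $\delta_i=\mathbf W_{c_i i}-\frac1{k-1}\sum_{j\ne c_i}\mathbf W_{ji}=\frac{k}{k-1}\bigl(\mathbf W_{c_i i}-\bar w_i\bigr)$ and $\bar w_i$ is the column mean. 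Since $g$ is convex and decreasing, a second Jensen step gives $L(\mathbf W)\ge k\,g(\bar\delta)$ with $\bar\delta=\frac1k\sum_i\delta_i$.

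\textbf{Step 2 (rank-limited margin).} We have $\sum_i(\mathbf W_{c_i i}-\bar w_i)=\langle\mathbf W,\mathbf Q\rangle$, where the $i$-th column of $\mathbf Q$ is $e_{c_i}-\frac1k\mathbf 1$. These columns take at most $K$ distinct values, so $\operatorname{rank}\mathbf Q\le K$. Also $\|\mathbf Q\|_F^2=k(1-1/k)=k-1$. Hence
\[
\langle\mathbf W,\mathbf Q\rangle\le\Snorm{\mathbf W}\Nnorm{\mathbf Q}\le\Snorm{\mathbf W}\sqrt K\,\Fnorm{\mathbf Q}=\Fnorm{\mathbf W}\sqrt{\tfrac{K(k-1)}{\erank(\mathbf W)}},
\]
using $\Snorm{\mathbf W}=\Fnorm{\mathbf W}/\sqrt{\erank(\mathbf W)}$. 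Therefore $\bar\delta\le a\rho$, where $\rho:=\sqrt{K/\erank(\mathbf W)}$ and
\[
a:=\frac{\Fnorm{\mathbf W}}{\sqrt k}\sqrt{\frac{k}{k-1}}.
\]
The quantity $a$ depends only on $k$ and $\Fnorm{\mathbf W}/\sqrt k$.

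\textbf{Step 3 (linearize to expose the rank factor).} Because $g$ is decreasing, $L\ge k\,g(a\rho)$. If $\rho\ge1$, the claimed right-hand side is nonpositive and there is nothing to prove, so assume $\rho<1$. Convexity of $g$ gives $g(a\rho)\ge g(a)+g'(a)(a\rho-a)=g(a)+a|g'(a)|(1-\rho)$. Dropping $g(a)>0$ yields
\[
L(\mathbf W)\ge C_k(\mathbf W)\,k\,(1-\rho),\qquad C_k(\mathbf W)=a|g'(a)|=\frac{a(k-1)e^{-a}}{1+(k-1)e^{-a}}>0.
\]
This is exactly the stated form, and $C_k(\mathbf W)>0$ because $\mathbf W\ne0$ forces $a>0$. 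The monotonicity in $\erank$ and the positivity once $\erank(\mathbf W)>K$ are then immediate.

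\textbf{Main obstacle.} The delicate point is making $C_k$ a genuine, non-degenerate scale factor, which is the role of the assumption $\Snorm{\mathbf W}\le B$. From $\Fnorm{\mathbf W}^2\le k\Snorm{\mathbf W}^2$ we get $\Fnorm{\mathbf W}/\sqrt k\le B$, so $a\le B\sqrt{k/(k-1)}=O(1)$. Then $e^{-a}$ is bounded below and $C_k$ is of order $a$, not exponentially small. I would double-check the Jensen step in Step 1, where direction and sign matter. I would also check that the rank bound in Step 2 uses $K$ and not $K+1$. This holds because the columns of $\mathbf Q$ are exactly the vectors $e_c-\frac1k\mathbf 1$ for the $K$ distinct values of $c$.
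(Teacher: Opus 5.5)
Your proof is correct, and it reaches exactly the paper's constant. Your tangent point $a=\|\mathbf W\|_F/\sqrt{k-1}$ is the paper's $t_0=R\sqrt{k/(k-1)}$ with $R=\|\mathbf W\|_F/\sqrt k$, so your $C_k(\mathbf W)=a|g'(a)|$ coincides with $\beta_{k,R}$.

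The route differs in the middle step. The paper linearizes each column's loss at $t_0$ first. It then groups columns by target index, bounds $\langle \mathbf v_c,\mathbf W\mathbf 1_{S_c}\rangle\le\|\mathbf W\|_2\sqrt{m_c}$, and finishes with $\sum_c\sqrt{m_c}\le\sqrt{Kk}$. You instead aggregate the margins by Jensen before linearizing, and you bound the total margin by trace duality against the rank-$\le K$ matrix $\mathbf Q$ via $\|\mathbf Q\|_*\le\sqrt K\,\|\mathbf Q\|_F$.

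Both arguments bound the same linear functional: the paper's matrix with columns $\mathbf v_{c_i}$ is exactly $\sqrt{k/(k-1)}\,\mathbf Q$. The paper's group-wise Cauchy--Schwarz amounts to the triangle-inequality estimate $\|\mathbf Q\|_*\le\sum_c\|\mathbf e_c-\tfrac1k\mathbf 1\|_2\sqrt{m_c}$, whereas you use the rank estimate.

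Your ordering buys the unlinearized intermediate floor $L(\mathbf W)\ge k\log\bigl(1+(k-1)e^{-a\rho}\bigr)$. This is strictly stronger than the stated form, since it exceeds it by at least $k\,g(a)$. It also shows that $a\rho=\|\mathbf W\|_2\sqrt{K/(k-1)}$, so the effective rank appears only when the spectral norm is rewritten in Frobenius units.

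The paper's grouping buys the intermediate quantity $\sum_c\sqrt{m_c}$, which is sharper than your $\sqrt{Kk}$ when target groups are unbalanced. You could recover this refinement by applying the triangle inequality to $\|\mathbf Q\|_*$ instead of the rank bound.
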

\end{theorembox}

\vspace{-5pt}

Thus, for this objective, sufficiently high effective rank is not merely
unnecessary at the optimum: it can be incompatible with arbitrarily small
loss. In particular, if $\mathbf{X}$ and $\mathbf{Y}$ are $k\times k$ scaled
orthogonal matrices, then
$\erank(\mathbf{X}\mathbf{Y})=k$, and Theorem~\ref{thm:endpoint} gives the
positive endpoint loss floor
\(
C_k(\mathbf{W})\,k
\left(
1-\sqrt{K/k}
\right)
\)
whenever $K<k$.

Together, these results illustrate a tension in the stylized model: broad spectral support can relax a path-side bottleneck, while a persistent spectral constraint can restrict low-loss solutions. We also test whether these intuitions are useful in the empirical training.
The same spectral property can therefore be useful as a transient feature of
the optimization trajectory while being harmful as a persistent constraint on
the final solution.

This distinction helps reconcile the empirical behavior above.
The high effective rank naturally maintained by Muon can support optimization
without prescribing a fixed spectral geometry for the final weights.
By contrast, increasingly restrictive spectral constraints preserve the desired
structure by reducing the freedom of the model to adapt its spectrum as
training proceeds.
The appropriate goal is therefore not to maximize effective rank at every
point in training, but to retain enough spectral support to avoid path-side
bottlenecks while still allowing the spectrum to adapt to the task.

\section{ORCA: A Minimal Shape-then-Release Optimizer}
\label{sec:orca}

The preceding analysis reveals a tension between the trajectory-side benefit
and the endpoint-side cost of spectral control.
Broad spectral support can facilitate optimization along the training
trajectory, while enforcing the same structure persistently may restrict
task-adaptive solutions at the endpoint.
This suggests a temporal design principle: spectral control should shape the
early optimization trajectory, but be released before it restricts task
adaptation.

ORCA (Orthogonal Regularization, Cooled After) is a minimal two-stage
realization of this shape-then-release principle.
During an initial active stage, ORCA applies spectral shaping to steer the
early trajectory away from spectrally concentrated regimes.
After this stage, the shaping term is removed completely and training reverts
to standard Muon.
This separation allows spectral control to influence the optimization path
without prescribing the spectral geometry of the final solution.

\subsection{Algorithm}
\label{sec:orca-algorithm}

\noindent\textbf{A minimal realization: soft orthogonal shaping.}
For this minimal realization, ORCA uses soft orthogonality as its spectral
shaping operator.
Consider a matrix parameter $\mathbf{W} \in \mathbb{R}^{p \times q}$, and assume
$p \leq q$ for notational simplicity.
We define
$
    \mathcal{R}_{\mathrm{orth}}(\mathbf{W})
    =
    1/4\
    \left\|\mathbf{W}\mathbf{W}^\top-I_p\right\|_F^2 ,
    \label{eq:orca-orth-reg}
$
whose gradient is
\begin{equation}
    G_{\mathrm{orth}}(\mathbf{W})
    :=
    \nabla_{\mathbf{W}}\mathcal{R}_{\mathrm{orth}}(\mathbf{W})
    =
    \left(\mathbf{W}\mathbf{W}^\top-I_p\right)W .
    \label{eq:orca-orth-grad}
\end{equation}

Writing a thin singular value decomposition as
$\mathbf{W}=\mathbf{U}\operatorname{diag}(\sigma_1,\ldots,\sigma_p)\mathbf{V}^\top$, we have
\begin{equation}
\begin{aligned}
    \mathcal{R}_{\mathrm{orth}}(\mathbf{W})
    =
    \frac{1}{4}
    \sum_{i=1}^{p}
    \left(\sigma_i^2-1\right)^2, \quad
    \mathbf{G}_{\mathrm{orth}}(\mathbf{W})
    =
    \mathbf{U}\operatorname{diag}
    \left(
        (\sigma_i^2-1)\sigma_i
    \right)_{i=1}^{p}
    \mathbf{V}^\top .
\end{aligned}
\label{eq:orca-orth-svd}
\end{equation}
Thus, a gradient step on $\mathcal{R}_{\mathrm{orth}}$ decreases singular
values above unit scale and increases nonzero singular values below unit scale,
tending to reduce spectral concentration and increase the effective-rank
measure considered in the preceding analysis.
The operator requires neither an explicit SVD nor a projection onto the
Stiefel manifold and can be computed using matrix multiplications alone.

\noindent\textbf{ORCA update.}
Let $\mathbf{U}_t$ denote the standard Muon update direction at step $t$,
$\eta_t$ the learning rate, $\lambda$ the decoupled weight-decay coefficient,
and $\gamma_t$ the spectral-shaping strength.
For each matrix parameter $\mathbf{W}_t$, the ORCA update can be written as
\begin{equation}
    \mathbf{W}_{t+1}
    =
    \mathbf{W}_t
    -
    \eta_t \mathbf{U}_t
    -
    \eta_t\lambda \mathbf{W}_t
    -
    \eta_t\gamma_t \mathbf{G}_{\mathrm{orth}}(\mathbf{W}_t).
    \label{eq:orca-update}
\end{equation}
The weight-decay and spectral-shaping corrections are both evaluated at the
pre-update parameter $\mathbf{W}_t$ \citep{loshchilov2017decoupled}, while $\mathbf{U}_t$ is computed in the same way as in
standard Muon.

Similar to Muon and other matrix-based optimizers, ORCA is applied only to
matrix parameters in the matrix-optimizer group.
Embeddings, the language-model head, and one-dimensional parameters continue
to follow their original AdamW updates.

\noindent\textbf{Release rule.}
The defining feature of ORCA is that spectral shaping is transient rather than
persistent.
Let $T$ denote the total number of training steps and
$\tau\in[0,1]$ the active fraction.
The active window has length $S$, and we use the hard release schedule
\begin{equation}
  \gamma_t =
  \begin{cases}
  \gamma, & t < S, \\
  0,      & t \geq S,
  \end{cases}
  \qquad
  S = \left\lfloor \tau T \right\rfloor.
  \label{eq:orca-gamma-schedule}
\end{equation}
Equation~\eqref{eq:orca-gamma-schedule} makes the two phases explicit.
During the \emph{active stage} ($t<S$), ORCA augments Muon with the
spectral-shaping term in Eq.~\eqref{eq:orca-update}.
During the \emph{released stage} ($t\geq S$), the shaping term vanishes and
the optimizer reverts exactly to the corresponding standard Muon training
rule.

The hard switch is the simplest implementation of the shape-then-release
principle.
The early intervention is used to steer the optimization trajectory away from
spectrally concentrated regimes, rather than to prescribe a persistent
spectral constraint.
Once released, the spectrum is free to adapt to the task objective.
Empirically, we find that part of the higher-effective-rank state induced
during the active stage persists after release, allowing the optimization
benefit to continue without maintaining the orthogonal regularizer.
Unless otherwise specified, we use $\tau=0.25$ throughout the paper;
Section~5 studies sensitivity to the release point.

\noindent\textbf{Beyond soft orthogonality.}
Soft orthogonality defines the default ORCA optimizer, but the underlying
shape-then-release principle is not tied to this particular spectral operator.
In Section~5, we replace soft orthogonal shaping with alternative early
spectral interventions, including Pion-start and DiagSVD variants.
These methods differ in how the spectral state is shaped during the active
stage; once the active window ends, the intervention is removed and all
variants continue with the same released Muon dynamics.
This provides a controlled way to study whether the benefit of early spectral
intervention extends beyond soft orthogonality itself.

\noindent\textbf{Implementation and overhead.}
In the standard setting, ORCA adds two matrix multiplications per weight matrix during shaping without changing the communication complexity; the remaining operations are fused into the optimizer update. Our Muon baseline performs 24 matrix multiplications across eight Newton--Schulz iterations, so shaping increases the multiplication count by \(2/24=8.3\%\). This extra work disappears after release. We batch matrices of identical shapes to reduce kernel launches, use dedicated kernels, and form the Gram matrix along the shorter dimension. On the 8B MoE model, average distributed optimizer update time is \(227.2\,\mathrm{ms}\) for Muon and \(237.4\,\mathrm{ms}\) for ORCA. The \(10.2\,\mathrm{ms}\) difference amounts to approximately \(0.8\%\) of a roughly \(1.3\,\mathrm{s}\) training step. Peak memory is \(101.41\,\mathrm{GB}\) and \(101.66\,\mathrm{GB}\), respectively. Overall, ORCA incurs negligible additional training overhead in this configuration.

\label{sec:implementation-details}

\section{Experiment Validations}
\label{sec:main-results}
In this section, we verify the efficacy of ORCA on three specific issues: (1) Llama pretraining with grid learning rate search and larger scale pretraining. (2) Pretraining task on Qwen3 architecture with the additional QK-norm that may violate the matrix factoring reparameterization. This section focuses on the efficacy of ORCA, and for more robustness test of parameters and controlled interventions on gain of ORCA, readers can refer to Section~\ref{sec:interventions}.

\subsection{Experimental Setup and Tuning Protocol}
\label{sec:setup}

\textbf{Setups.}  We pre-train Llama-series models \cite{touvron2023llama} ranging from 130M to 1.3B parameters on C4 , using 25B to 100B tokens. We also pre-train Qwen3-series models \cite{yang2025qwen3} ranging from 160M to 0.6B parameters on The Pile \citep{gao2020pile}, using  25B to 100B tokens. Finally, we train fine-grained mixture-of-experts models ranging from 0.8B to 8B parameters on OLMo-mix-1124 \citep{olmo20242}, the data mixture including DCLM-Baseline \citep{li2024datacomp} using 25B to 100B tokens.

We first run a learning rate grid-search for Adam, Muon, Pion and ORCA. All trials share the same data, batch size, sequence length, and 50k-step training budget. We use $\mathbf{O}_t = 0.2/\mathrm{Rms}(\mathbf{O}_t)\cdot \mathbf{O}_t$ to rescale the Muon and ORCA update where $\mathbf{O}_t$ is the Newton-Schulz orthogonalization output to align the update with the Adam's \citep{liu2025muon}.
We perform a two-stage coarse-to-fine learning rate grid search. The coarse sweep evaluates $\{5\times10^{-4},10^{-3},2\times10^{-3},5\times10^{-3}\}$. Once the coarse sweep locates a candidate learning rate, we evaluate the neighbors on its both sides to refine the best lr within a smaller local range. The refinement grid consists of $\{6,8\}\times10^{-4}$, $\{1.3,1.6\}\times10^{-3}$,
and $\{3,4\}\times10^{-3}$, which reduces the local spacing between learning rates candidates to about $1.2\times$--$1.5\times$.  All four optimizers use this protocol and the same tuning budget.

The advantage of ORCA holds over a wide range of learning rates: at every learning rate evaluated for both methods, ORCA reaches a lower final validation loss than Muon, and at all evaluated configurations between $10^{-3}$ and $5\times10^{-3}$ it is even better than fully tuned Muon. The independent searches select $1.3\times10^{-3}$, $3\times10^{-3}$, $10^{-3}$, and $2\times10^{-3}$ for Adam, Muon, Pion, and ORCA, with final validation losses of $2.8824$, $2.8460$, $2.9790$, and $2.8106$, respectively; ORCA thus lowers the loss by a further $0.0354$ relative to the best Muon and stays within $0.002$ of its own optimum across $1.6\times10^{-3}$ to $3\times10^{-3}$, showing that the improvement does not rely on a narrow learning-rate choice.

\label{sec:pretraining}

\textbf{LLama Pretraining Tasks.} We compare ORCA against Adam and Muon on LLaMA 350M and 1.3B, training each model for 100B tokens. As shown in \Cref{fig:main-pretraining}, Muon outperforms Adam at both scales, while ORCA further lowers the validation loss. Compared with Muon, ORCA reduces the final loss by $0.0194$ at 350M and $0.0244$ at 1.3B, with consistent gains at both scales. Further, a four-seed robustness result for the 130M, 50k-step setting is reported in
Appendix~\ref{app:more-experiment-results}.
We also evaluate the final ORCA and Muon LLaMA 1.3B checkpoints from \Cref{fig:main-pretraining} on five downstream tasks \citep{clark2018think, zellers2019hellaswag, sakaguchi2021winogrande, bisk2020piqa}. As shown in \Cref{tab:downstream}, ORCA outperforms Muon on all five tasks. This suggests that the lower pre-training loss of ORCA also leads to better downstream performance.

We also evaluate how the optimization performance translates into token efficiency by measuring the training required to reach a common target loss.
We continued ORCA from its original 51k-step checkpoint, starting the final decay early, and reached a validation loss of 2.318 at approximately 60k steps. The standard Muon run, using its original 100k-step settings, reached the same loss at 100k steps (\Cref{fig:token-efficiency}). Therefore, ORCA used 40\% fewer tokens to reach this target, a 1.66× token-efficiency gain. To save compute, we reused the checkpoint without retuning ORCA for the shorter run: its original 25k-step shaping stage accounts for 41.7\% of the 60k-step budget rather than the tuned default of 25\%.

\begin{figure}[t]
  \centering
  \begin{subfigure}[t]{0.32\linewidth}
    \centering
    \includegraphics[width=\linewidth]{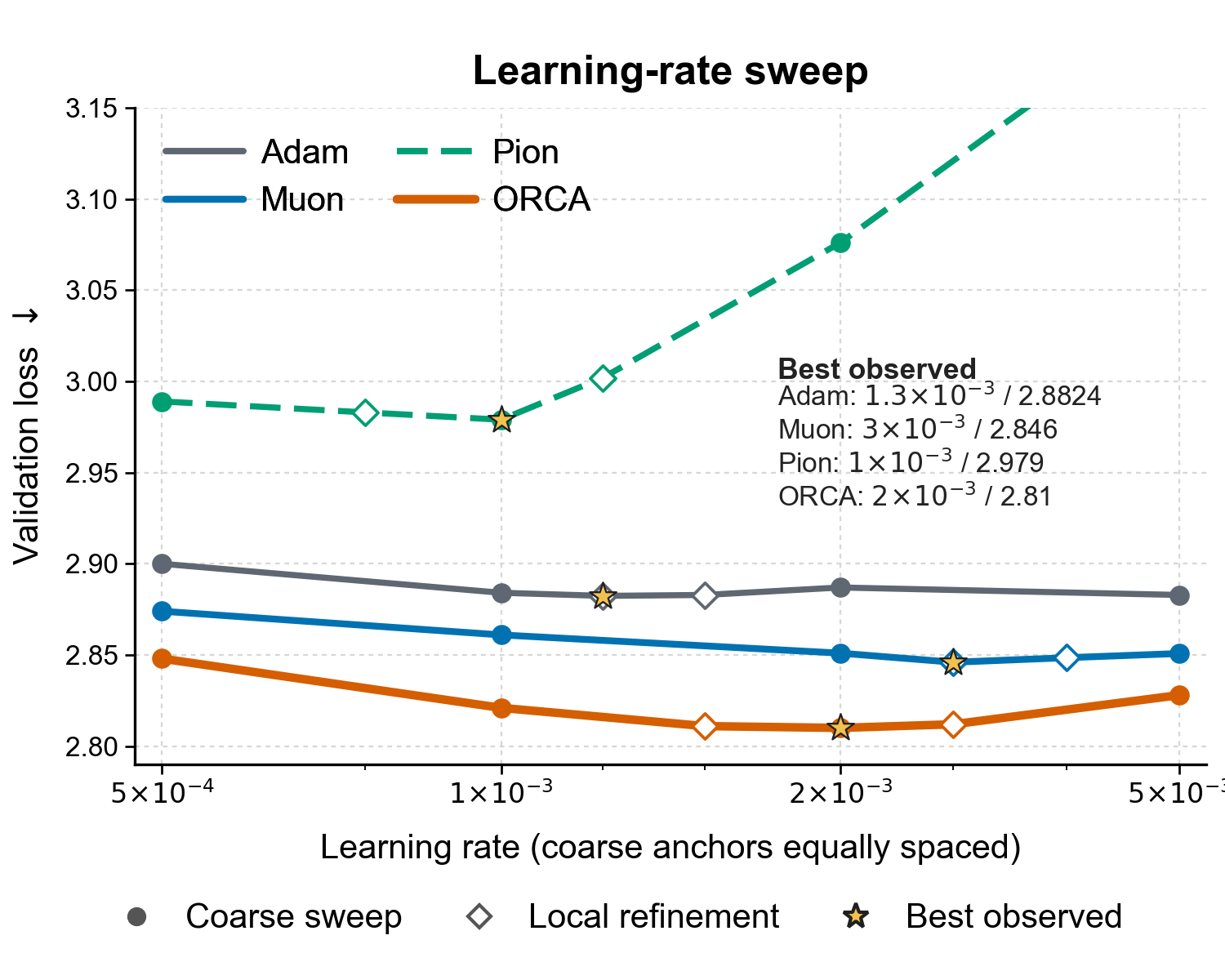}
    \captionsetup{skip=2pt}
    \caption{Learning-rate sweep}
    \label{fig:lr-sweep-panel}
  \end{subfigure}\hfill
  \begin{subfigure}[t]{0.32\linewidth}
    \centering
    \includegraphics[width=\linewidth]{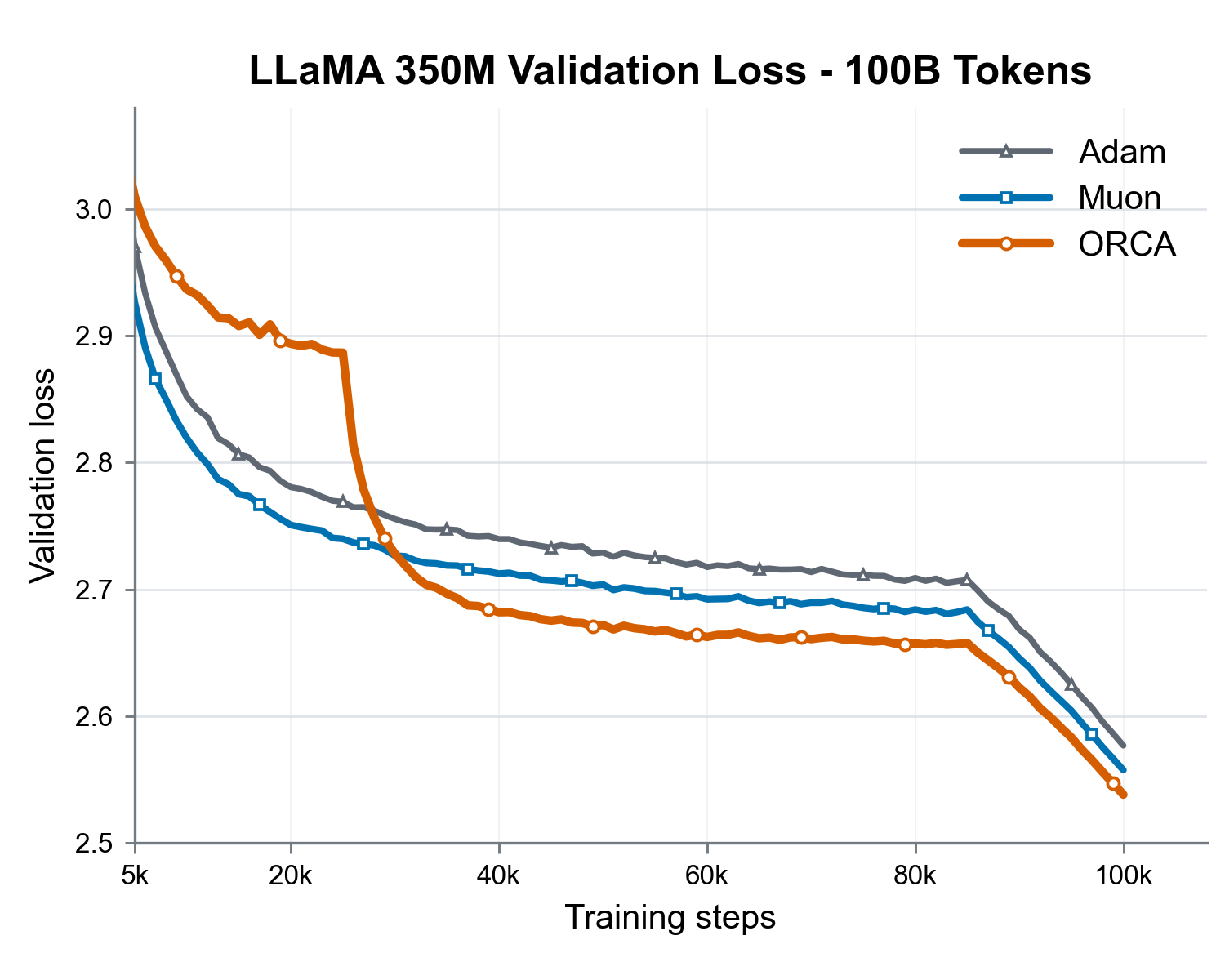}
    \captionsetup{skip=2pt}
    \caption{LLaMA 350M}
    \label{fig:main-350m}
  \end{subfigure}\hfill
  \begin{subfigure}[t]{0.32\linewidth}
    \centering
    \includegraphics[width=\linewidth]{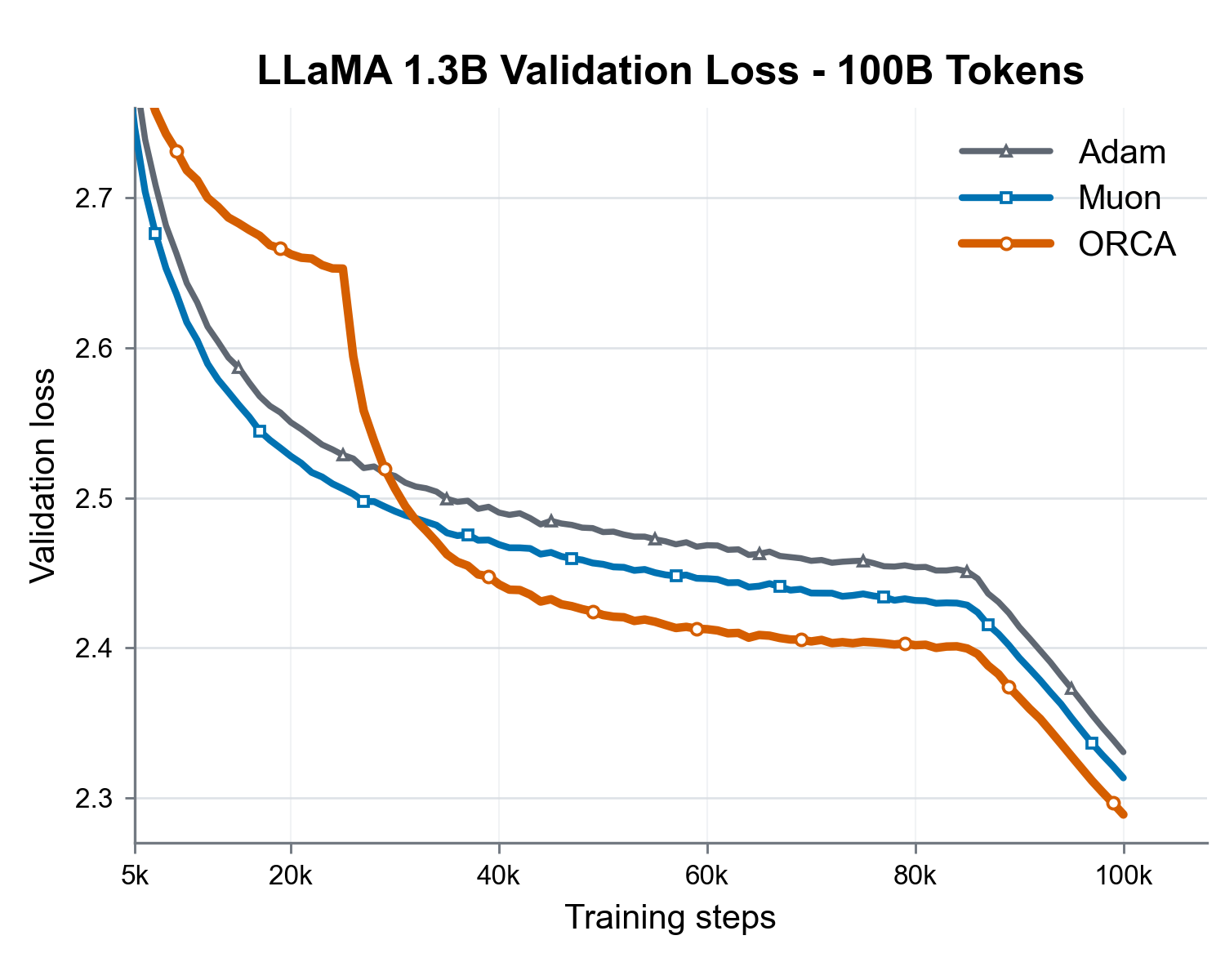}
    \captionsetup{skip=2pt}
    \caption{LLaMA 1.3B}
    \label{fig:main-1b}
  \end{subfigure}
  \captionsetup{skip=2pt}
  \caption{LLaMA pretraining results.}
  \vspace{-8pt}
  \label{fig:lr-sweep}
  \label{fig:main-pretraining}
\end{figure}

\begin{figure}[t]
  \centering
  \begin{subfigure}[t]{0.32\linewidth}
    \centering
    \includegraphics[width=\linewidth]{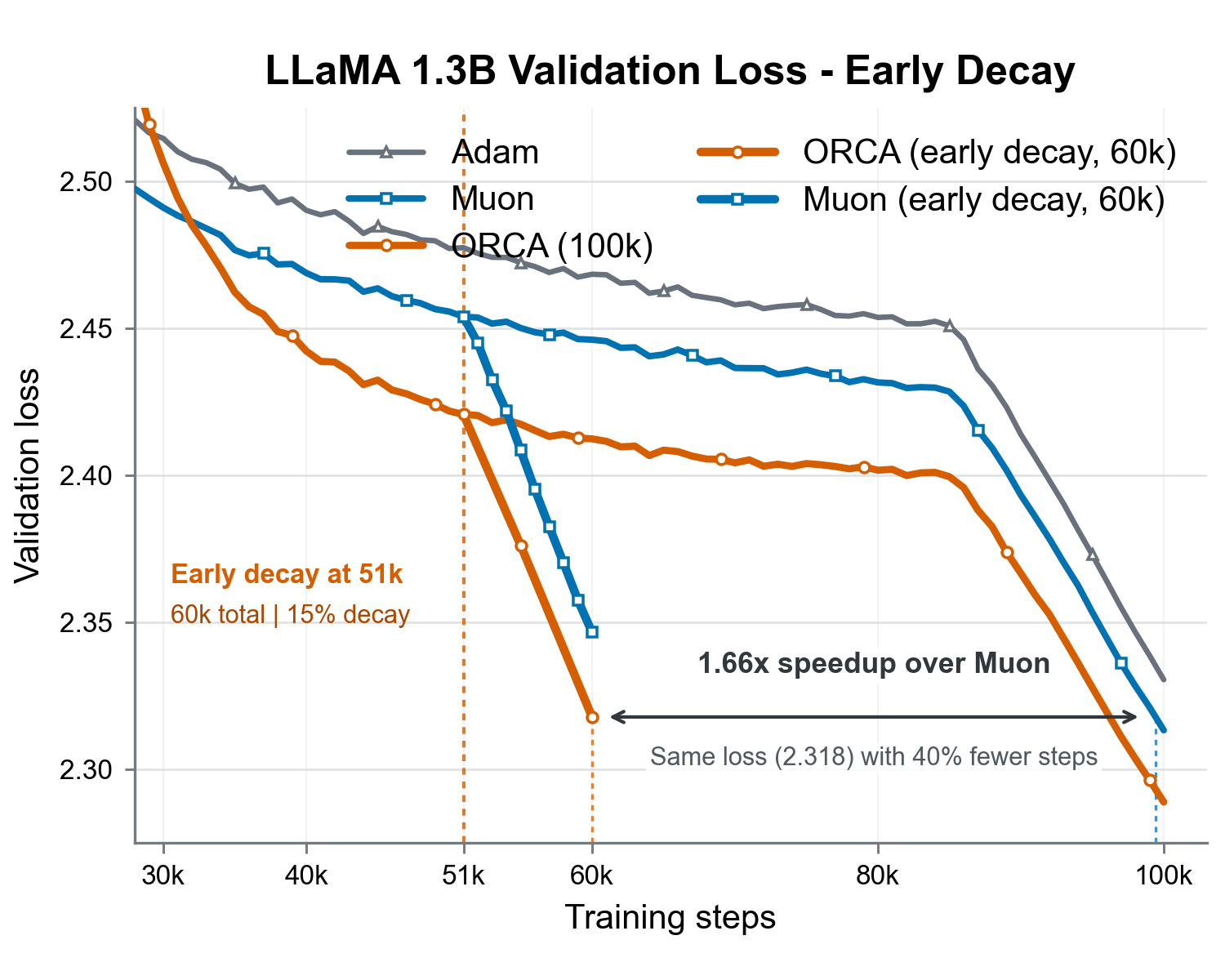}
    \captionsetup{skip=2pt}
    \caption{Matched-target token efficiency}
    \label{fig:token-efficiency}
  \end{subfigure}\hfill
  \begin{subfigure}[t]{0.32\linewidth}
    \centering
    \includegraphics[width=\linewidth]{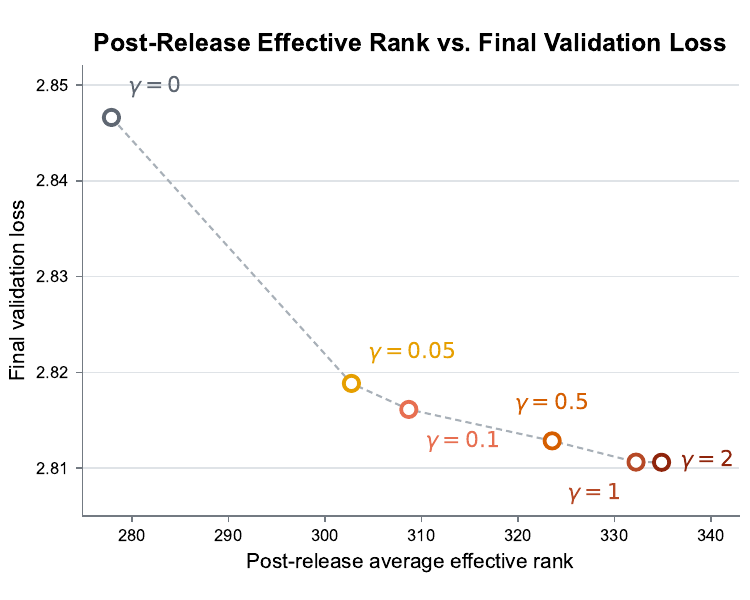}
    \captionsetup{skip=2pt}
    \caption{Shaping strength $\gamma$}
    \label{fig:rank-dose-response}
  \end{subfigure}\hfill
  \begin{subfigure}[t]{0.32\linewidth}
    \centering
    \includegraphics[width=\linewidth]{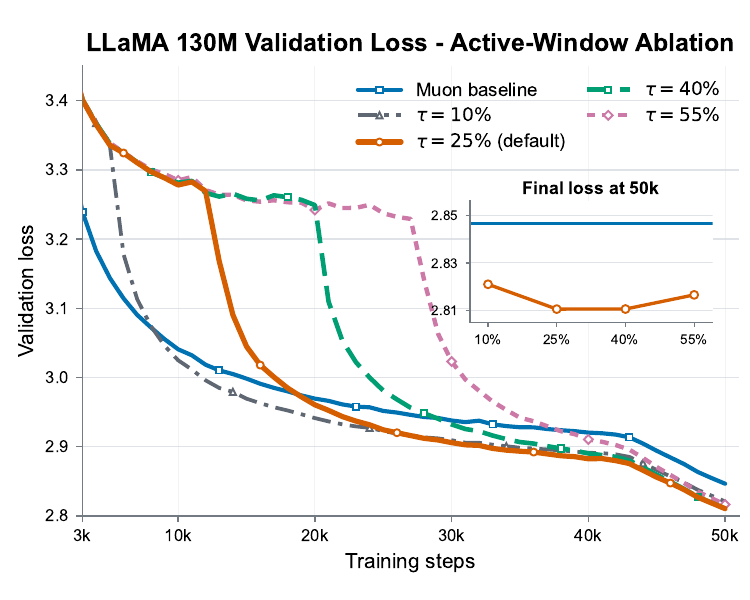}
    \captionsetup{skip=2pt}
    \caption{Active-window fraction $\tau$}
    \label{fig:release-point}
  \end{subfigure}
  \captionsetup{skip=2pt}
  \caption{Token efficiency and shaping ablations in LLaMA Pretraining.}
  \vspace{-15pt}
  \label{fig:shaping-strength-timing}
\end{figure}

\begin{table}[t]
  \centering
  \caption{Downstream evaluation on LLaMA 1.3B.}
  \label{tab:downstream}
  \small
  \setlength{\tabcolsep}{4pt}
  \begin{tabular}{@{}lcccccc@{}}
    \toprule
    Optimizer & ARC-C  & ARC-E & HellaSwag & PIQA & WinoGrande & Avg. \\
    \midrule
    Muon & $0.305\,(0.013)$ & $0.342\,(0.010)$ & $0.549\,(0.005)$
         & $0.705\,(0.011)$ & $0.547\,(0.014)$ & $0.489$ \\
    ORCA & $\mathbf{0.311}\,(0.014)$ & $\mathbf{0.357}\,(0.010)$
         & $\mathbf{0.573}\,(0.005)$ & $\mathbf{0.711}\,(0.011)$
         & $\mathbf{0.553}\,(0.014)$ & $\mathbf{0.501}$ \\
    \bottomrule
  \end{tabular}
  \vspace{-15pt}
  \par\smallskip
\end{table}

\textbf{Qwen3 Pretraining Tasks.} We compare Adam, Muon, and ORCA on Qwen3-style architecture. Within each model scale, all methods share the same common hyperparameters. The Qwen3 0.6B curves cover 50k optimizer steps (about 105B tokens), and the Qwen3 160M curves cover 25k optimizer steps (about 26B tokens).
In both Qwen settings, both sets of loss curves show the same ordering: ORCA may fall temporarily behind Muon early in training, then crosses over and holds the advantage at the
end of the observed interval. The validation-loss improvement of ORCA is therefore not confined to LLaMA, but also reproduces on the Qwen3 architecture.

\textbf{Larger Scale Fine-grained MoE Pretrining.} We observed
router perturbations under a fixed global release point, so we release the
orthogonal constraint layerwise: at $20\%$ of training in the first layer and
at $60\%$ in the last layer, with release times linearly interpolated across
layers. We use top-$4$ routing with $64$ experts in the MoE $0.8$B model
($0.24$B active parameters) and $64$ experts in the MoE $8$B model ($1$B
active parameters) \citep{dai2024deepseekmoe}. Due to computational budget constraints, we focus our comparison on Muon and ORCA. ORCA achieves the lowest final loss at both $25$B and
$50$B tokens for the $0.8$B model, showing that its gain persists as the token
budget increases.
The $8$B experiment extends this result to a larger model trained on $100$B tokens, where ORCA also finishes below Muon (\Cref{fig:moe}). Across these MoE experiments, ORCA reduces final validation loss by $0.020$–$0.025$ relative to Muon.

\textbf{Early orthogonal shaping also benefits other base optimizers.}
We apply the same soft orthogonal shaping step~\eqref{eq:orca-update} to Adam and SOAP on LLaMA 130M \citep{vyas2025soap}. Shaping is active only during the first $25\%$ of training; the base optimizer then continues alone.
For both optimizers, shaping temporarily raises validation loss while active, but the shaped runs overtake their baselines after its removal and finish lower. At 50k steps, it reduces Adam's loss from $2.8824$ to $2.8435$ and SOAP's from $2.8640$ to $2.8271$, gains of $0.0389$ and $0.0369$, respectively. These results show that the benefits of early orthogonal shaping extend beyond Muon to other base optimizers.

\begin{figure}[t]
  \centering
  \begin{subfigure}[t]{0.32\linewidth}
    \centering
    \includegraphics[width=\linewidth]{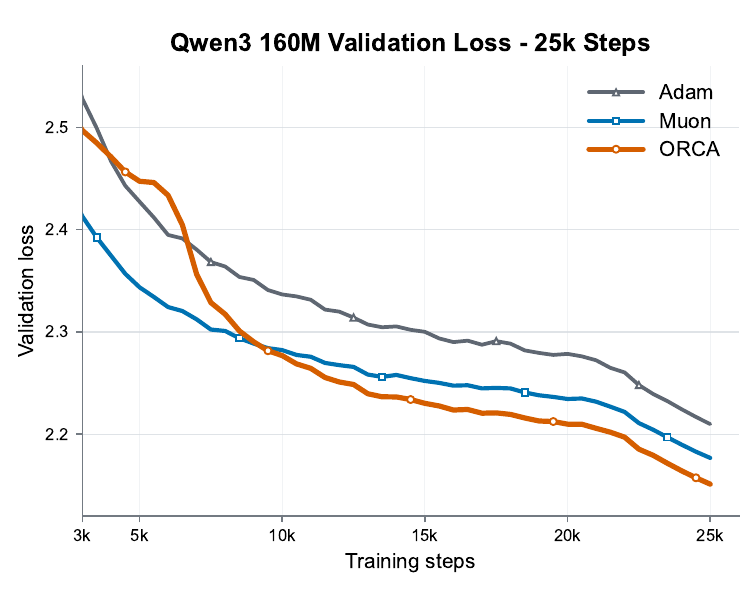}
    \captionsetup{skip=2pt}
    \caption{Qwen3 160M}
    \label{fig:qwen-160m}
  \end{subfigure}\hfill
  \begin{subfigure}[t]{0.32\linewidth}
    \centering
    \includegraphics[width=\linewidth]{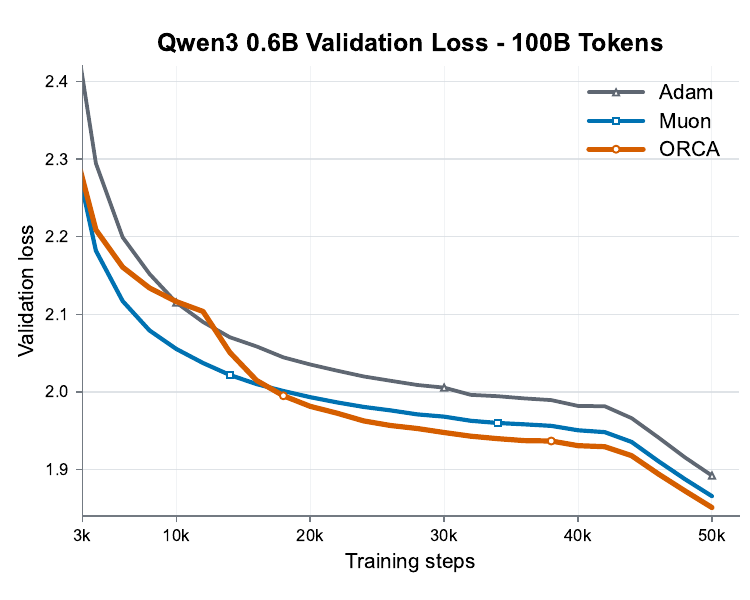}
    \captionsetup{skip=2pt}
    \caption{Qwen3 0.6B}
    \label{fig:qwen-06b}
  \end{subfigure}\hfill
  \begin{subfigure}[t]{0.32\linewidth}
    \centering
    \includegraphics[width=\linewidth]{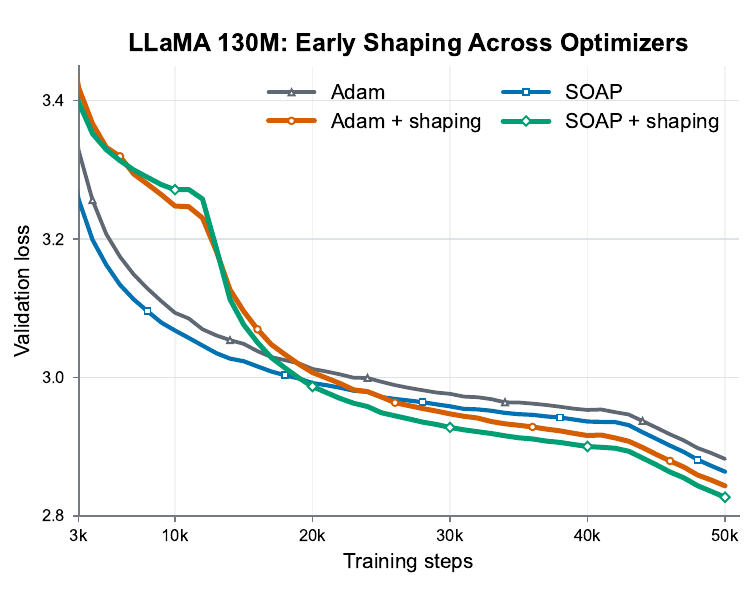}
    \captionsetup{skip=2pt}
    \caption{LLaMA 130M, base optimizers}
    \label{fig:optimizer-generality-panel}
  \end{subfigure}
  \captionsetup{skip=2pt}
  \caption{Validation loss across architectures and base optimizers.}
  \vspace{-15pt}
  \label{fig:qwen}
  \label{fig:optimizer-generality}
\end{figure}

\begin{figure}[t]
  \centering
  \begin{subfigure}[t]{0.32\linewidth}
    \centering
    \includegraphics[width=\linewidth]{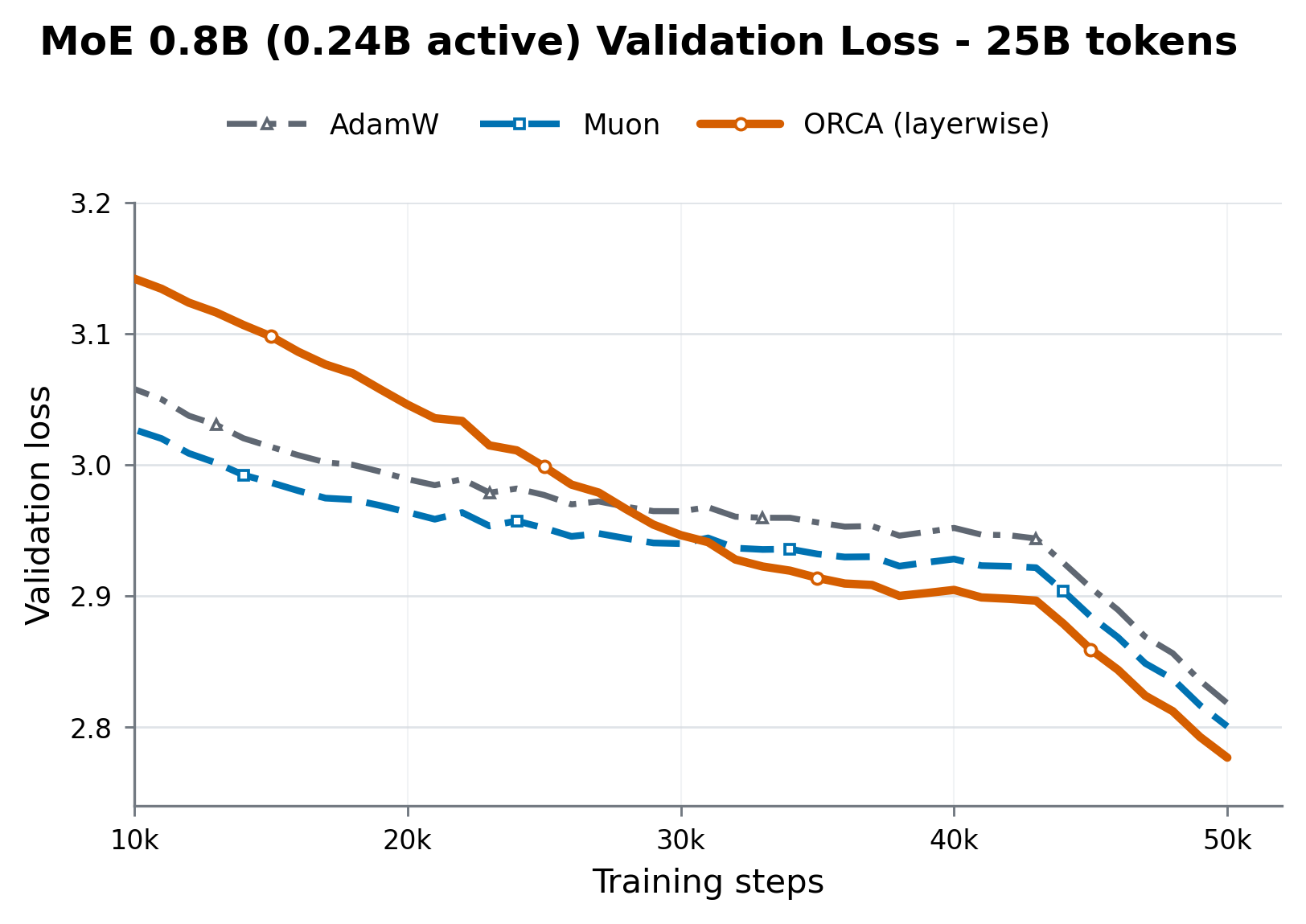}
    \captionsetup{skip=2pt}
    \caption{MoE $0.8$B, $25$B tokens}
    \label{fig:moe-08b-25b}
  \end{subfigure}\hfill
  \begin{subfigure}[t]{0.32\linewidth}
    \centering
    \includegraphics[width=\linewidth]{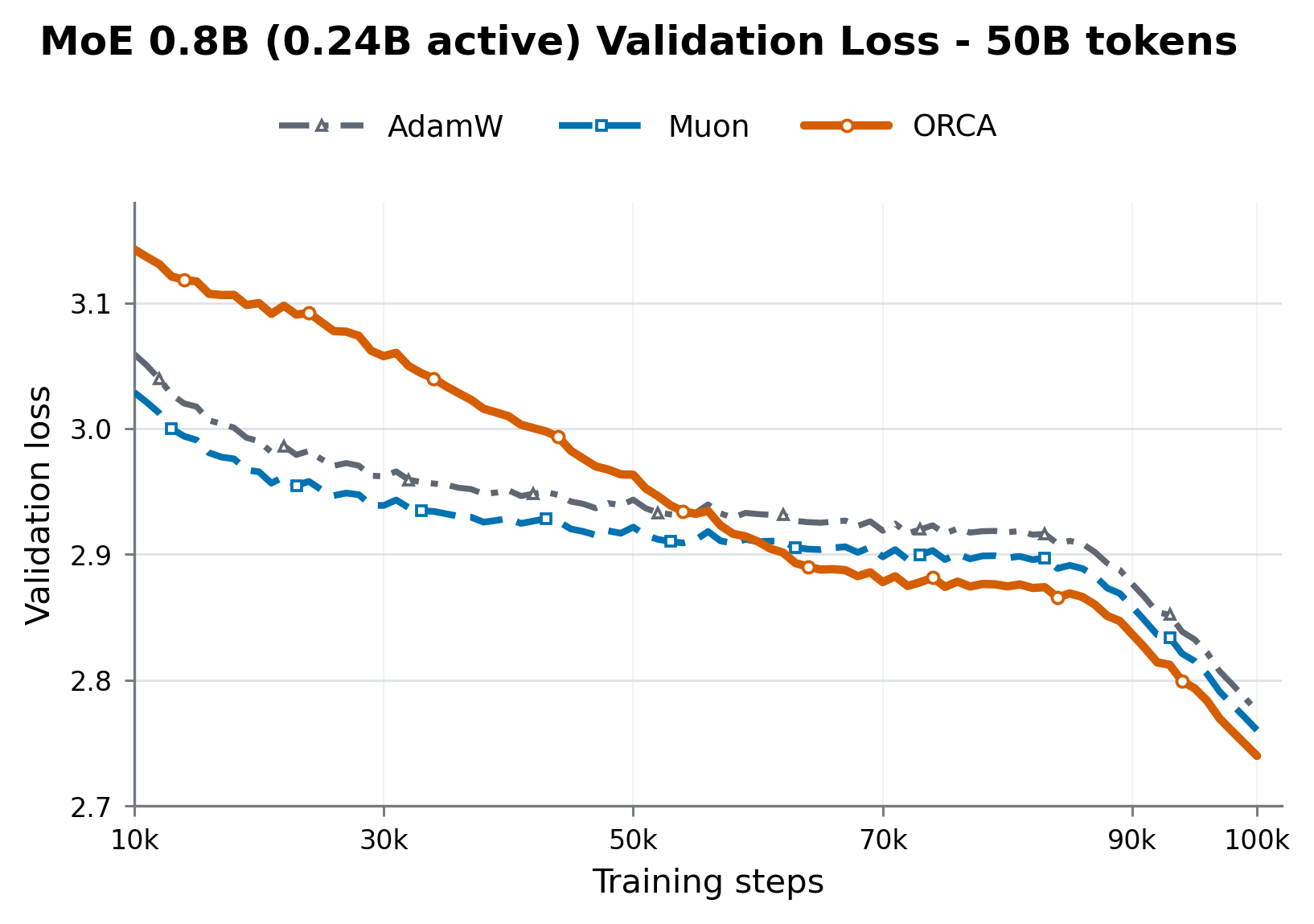}
    \captionsetup{skip=2pt}
    \caption{MoE $0.8$B, $50$B tokens}
    \label{fig:moe-08b-50b}
  \end{subfigure}\hfill
  \begin{subfigure}[t]{0.32\linewidth}
    \centering
    \includegraphics[width=\linewidth]{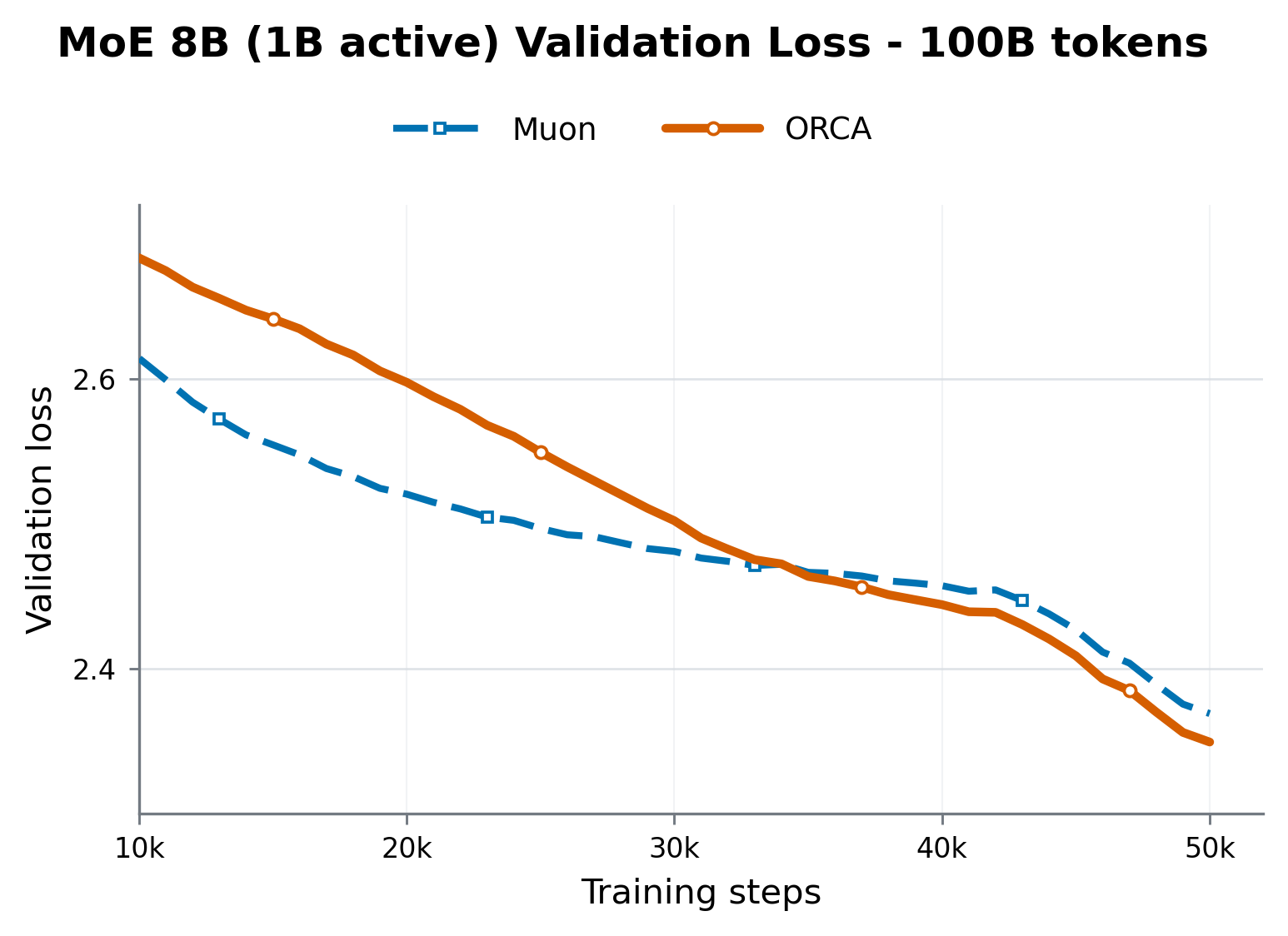}
    \captionsetup{skip=2pt}
    \caption{MoE $8$B $1$B active, $100$B tokens}
    \label{fig:moe-8b}
  \end{subfigure}
  \captionsetup{skip=2pt}
  \caption{Fine-grained MoE validation loss with layer-wise release.}
  \vspace{-15pt}
  \label{fig:moe}
\end{figure}

\section{Where the Gain Comes From: Controlled Interventions}
\label{sec:interventions}

\Cref{fig:intro-rank-loss} offers an initial view of ORCA's mechanism:
early shaping raises the effective rank, part of this increase persists after
shaping is released, and loss then decreases faster. While
\Cref{sec:main-results} establishes ORCA's optimization gains, in this section
we vary shaping strength and timing, as well as the form of the early spectral
intervention, to isolate their contributions to these gains. Additional
controls for optimizer state and learning-rate schedules are reported
separately in Appendix~\ref{app:controls}.

\textbf{Retained Effective Rank Predicts Post-Release Gain}
We first test whether the effective-rank difference retained after release
tracks ORCA's final gain. Holding all other settings fixed, we sweep the shaping
strength over $\gamma\in\{0,0.05,0.1,0.5,1,2\}$ and measure both the average
post-release effective rank and the final validation loss.
The sweep exhibits a clear dose response. As $\gamma$ increases, the retained
effective rank rises from about $278$ to $335$, while the final validation loss
falls from $2.8466$ to $2.8106$ (\Cref{fig:rank-dose-response}). The relation
flattens at larger $\gamma$, indicating diminishing returns. This
supports retained effective rank as an important variable behind ORCA's
post-release gain.

\noindent\textbf{Release-point sensitivity.}
We first vary the duration of the initial shaping stage over
$\tau\in\{0.10,0.25,0.40,0.55\}$ on LLaMA 130M. Every setting improves over
Muon's final loss of $2.8512$: the four release points reach $2.8210$, $2.8106$,
$2.8107$, and $2.8166$, respectively (\Cref{fig:release-point}). The broad
effective range shows that ORCA does not rely on a precise release point. We use
$\tau=0.25$ by default because it is the shortest tested window in the
best-performing range.

\begin{wrapfigure}[13]{r}{0.38\textwidth}
  \vspace{-\intextsep}
  \centering
  \includegraphics[width=\linewidth]{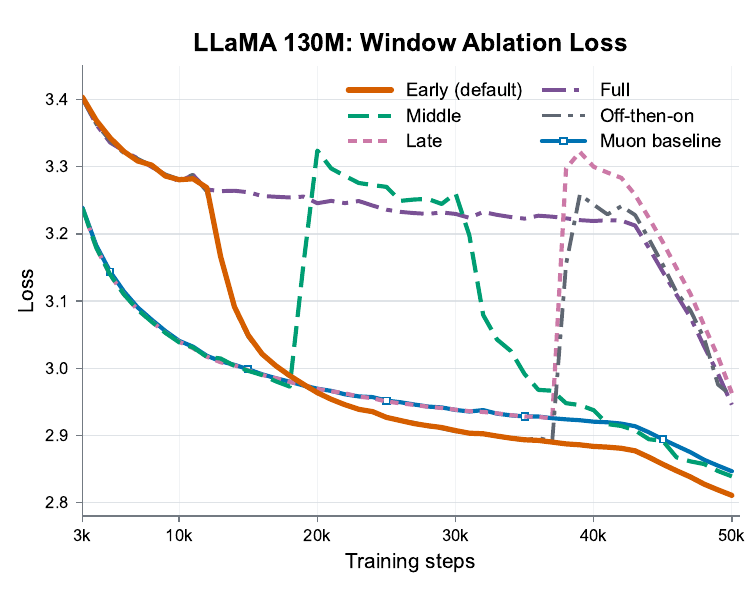}
  \captionsetup{skip=2pt}
  \caption{Loss curves under different shaping windows on LLaMA 130M.}
  \label{fig:window-ablation}
\end{wrapfigure}

\noindent\textbf{Shaping-window ablation.}
We next compare the default early window $[0,0.25T)$ with middle
$[0.375T,0.625T)$, late $[0.75T,T)$, full-training, and off-then-on
$[0,0.25T)\cup[0.75T,T)$ schedules. Together with the effective-rank dynamics
in \Cref{fig:intro-rank-loss}, \Cref{fig:window-ablation} shows two effects qualitatively consistent with the simplified analysis:
\textbf{(i) shaping is costly while active.} Validation loss rises and remains
on a high plateau whenever shaping is imposed, consistent with persistently
high effective rank restricting the task-adaptive spectrum. \textbf{(ii) The
benefit emerges after release and requires time.} Once shaping is removed, the
retained high-rank state supports faster Muon descent without the same endpoint
restriction. Early shaping leaves the longest released phase and achieves the
lowest final loss. Middle shaping also recovers rapidly and finishes slightly
below Muon, but remains worse than early; late, full, and off-then-on shaping
leave insufficient or no final released phase and fail to recover by the
endpoint. These explain ORCA's two-stage design: raise effective rank
early, then reserve most of training for unconstrained optimization.

\vspace{-4pt}

\subsection{Alternative Early Spectral Interventions}
\label{sec:alternative-interventions}
\vspace{-4pt}
ORCA's mechanistic claim extends beyond soft orthogonal regularization: if raising and retaining effective rank early improves optimization after release, other ways of doing so should yield similar gains. We test this with DiagSVD and Pion-start, comparing them with ORCA and standard Muon in 50k-step LLaMA 130M training. Both alternatives replace soft orthogonal shaping during the first $25\%$ of training; afterward, all variants follow the same Muon dynamics.

DiagSVD directly amplifies updates along weaker singular directions. It
reweights the Muon direction $\widetilde{\mathbf{u}}$ in the basis $\mathbf{U}$ of the left
singular vectors of $\mathbf{W}$:
\begin{equation}
  \widehat{\mathbf{u}}=\mathbf{U}\diag(s_i)\mathbf{U}^\top\widetilde{\mathbf{u}}+s_r(\mathbf{I}-\mathbf{U}\mathbf{U}^\top)\widetilde{\mathbf{u}},
  \qquad
  s_i=1+\theta(i/r)^\delta,
  \label{eq:diagsvd}
\end{equation}
where the singular values are in descending order and $\theta=128$,
$\delta=3$. DiagSVD assigns larger update coefficients to directions associated
with smaller singular values, thereby promoting a
higher effective rank of the weight matrix. The exact and lazy variants refresh
the singular-vector basis every step and every five steps, respectively.
Pion-start instead provides a spectrum-preserving intervention. Given the task
gradient $\mathbf{G}$, it constructs the skew-symmetric generators
\begin{equation}
  \mathbf{G}_{\mathrm{in}}=\mathbf{W}^\top \mathbf{G}-(\mathbf{W}^\top \mathbf{G})^\top,
  \qquad
  \mathbf{G}_{\mathrm{out}}=\mathbf{G}\mathbf{W}^\top-(\mathbf{G}\mathbf{W}^\top)^\top,
  \label{eq:pion-generators}
\end{equation}
and updates the weight from both sides as
\begin{equation}
  \mathbf{W}^+=E_2(\mathbf{A}_{\mathrm{out}})\mathbf{W}E_2(\mathbf{A}_{\mathrm{in}}),
  \qquad
  E_2(\mathbf{A})=\mathbf{I}+\mathbf{A}+\tfrac12\mathbf{A}^2,
  \label{eq:pion-start-update}
\end{equation}
\begin{wrapfigure}{r}{0.40\textwidth}
  \vspace{-\intextsep}
  \centering
  \includegraphics[width=\linewidth]{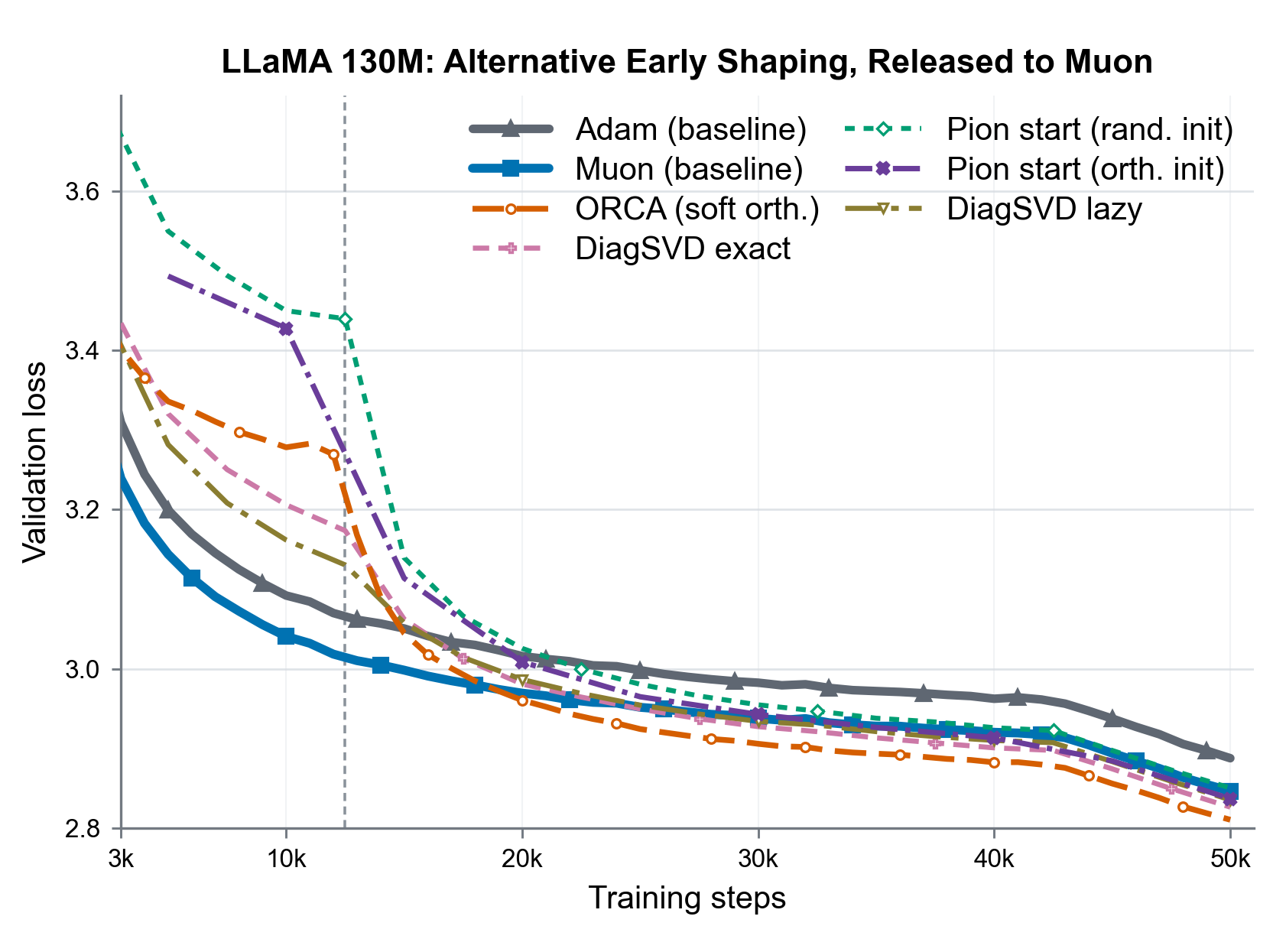}
  \captionsetup{skip=2pt}
  \caption{Alternative early spectral interventions on LLaMA 130M,
  released to Muon after $25\%$ of training.}
  \label{fig:alternative-shaping}
\end{wrapfigure}
where $\mathbf{A}_{\mathrm{in}}$ and $\mathbf{A}_{\mathrm{out}}$ are obtained from momentum and
second-moment estimates of the corresponding generators. The underlying
two-sided Lie action preserves the singular spectrum, and hence the effective
rank, during the intervention period; $E_2$ is its second-order approximation.
For Pion-start based methods, whose underlying dynamics preserve the singular spectrum, an orthogonalized start improves the final loss from $2.8505$ with a random start to $2.8371$. This offers an indication that early orthogonality matters. The two DiagSVD variants recover part of ORCA's gain: against Muon's $2.8512$, DiagSVD exact and lazy reach $2.8265$ and $2.8356$, respectively, while ORCA reaches $2.8106$. By amplifying early updates along directions of smaller singular values, DiagSVD improves the final loss even after release. These all indicate early spectral shaping can benefit later optimization.

\vspace{-4pt}

\section{Further Discussions}

\paragraph{The scale of orthogonal regularization.}
A natural practical question is how to choose the target scale of
orthogonal regularization.
The unit-scale regularizer used above corresponds to \(c=1\) in
\[
    R_{\mathrm{orth}}(\mathbf{W})
    =
    \frac{1}{4}
    \left\|
        \mathbf{W}\mathbf{W}^{\top}
        -
        c^{2}\mathbf{I}
    \right\|_{F}^{2},
    \qquad c > 0,
\]
where \(c\) directly controls the target singular-value scale.
A smaller \(c\) may offer two potential advantages.

First, from a feature-learning perspective, different spectral components
need not develop at the same rate.
Such differences have been analyzed in overparameterized matrix sensing
\citep{li2018algorithmic}.
A smaller \(c\) sets a lower target scale and may help limit the dominance
of rapidly growing directions, potentially facilitating the development
of weaker directions.
This is a possible mechanism, however, and the recovery of weak directions
is not automatically guaranteed.

Second, \(c\) provides an additional degree of freedom for balancing
the effective learning rates of matrix parameters and RMSNorm parameters.
In particular, when
\(\mathbf{W}\in\mathbb{R}^{p\times q}\) with \(p\leq q\),
the target \(\mathbf{W}\mathbf{W}^{\top}\approx c^{2}\mathbf{I}\)
implies \(\operatorname{RMS}(\mathbf{W})\approx c/\sqrt{q}\).
Since Muon's matrix updates are RMS-normalized, their magnitudes are
largely independent of the current weight scale, whereas the relative
update size depends inversely on
\(\operatorname{RMS}(\mathbf{W})\).
Consequently, choosing a smaller \(c\) increases the relative learning
rate of matrix parameters and may help balance their learning dynamics
with those of the RMSNorm scaling parameters, which are typically
optimized separately using AdamW.
This additional flexibility can be useful when the two parameter groups
otherwise evolve at substantially different relative rates.

Empirically, we observe that higher-quality training data tend to favor
a smaller \(c\).
Let \(\mathbf{g}\) denote the learnable per-channel scaling vector
of the corresponding RMSNorm layer.
As a practical heuristic, we find it useful to choose \(c\) slightly below
the maximum absolute value of these scaling parameters,
\(\|\mathbf{g}\|_{\infty}=\max_j |g_j|\).
This is an empirical guideline rather than a universal scaling law.

\paragraph{Dependence on the training budget and learning-rate schedule.}
ORCA's benefit requires sufficient training after release.
In our experiments, runs with approximately \(10{,}000\) optimizer updates
or fewer can underperform Muon; we recommend at least \(30{,}000\) updates
as a practical guideline.
Our default shaping window of \(25\%\) is selected under the WSD schedule
and is not necessarily optimal for other learning-rate schedules.
In particular, the same active-step fraction \(S/T\) can correspond
to different fractions of the total learning-rate budget,
\[
    \rho(S)
    =
    \frac{\sum_{t=0}^{S-1}\eta_t}
         {\sum_{t=0}^{T-1}\eta_t},
\]
which provides a useful reference for comparing and adjusting release
points across schedules.
Nevertheless, Figure~\ref{fig:release-point} shows consistent gains over Muon
across a broad range of release points, indicating robustness to the release fraction.

\paragraph{Matrix granularity and applicability to router weights.}
The granularity at which attention projection matrices are treated
is another important implementation choice.
Among the configurations we searched, treating
\(\mathbf{W}_{Q}\) and \(\mathbf{W}_{K}\) head-wise while retaining
\(\mathbf{W}_{V}\) and \(\mathbf{W}_{O}\) as full matrices performed best.
This configuration outperformed both treating all four projections
head-wise and treating all four as full matrices.
Applying ORCA to router weights also yields benefits in our MoE experiments,
suggesting that its applicability extends beyond attention and
feed-forward projections.

\paragraph{Generality of the two-phase framework.}
We emphasize that the main contribution of this paper is the general
two-phase optimization principle:
introducing a temporary intervention during early training to shape
a favorable optimization trajectory, followed by releasing the intervention
to restore the flexibility needed for task-specific adaptation.
This principle is not tied to any particular optimizer or regularization.

\paragraph{Flexibility in the choice of optimizers.}
On the one hand, the framework can be combined with advanced matrix-based
optimizers, such as SOAP.
On the other hand, it can also be applied to computationally inexpensive
optimizers, such as Adam, without introducing sophisticated matrix-based
update rules.
As demonstrated in our experiments, both SOAP and Adam benefit from
the same early-shaping and subsequent-release strategy.
This flexibility makes the two-phase framework applicable across
different computational budgets and optimization settings.

\paragraph{Flexibility in the choice of regularization.}
Similarly, the orthogonal regularizer used in ORCA is only one possible
realization of early geometric shaping and can be replaced by either
stronger or more lightweight constraints.
On the one hand, stronger geometric constraints, such as optimization
over a possibly scaled Stiefel manifold, provide more explicit control
of the weight geometry and may be particularly beneficial for architectures
prone to severe spectral degeneration.


On the other hand, lightweight alternatives may offer better scalability. For example, row-oblique constraints enforce equal row norms without requiring orthogonality between rows. Other options include analogous column-wise norm constraints and group-wise orthogonality penalties, which encourage orthogonality only within small groups of rows or columns. These alternatives retain partial control over weight geometry without forming the full Gram matrix, although they generally provide weaker protection against spectral degeneration. When chosen to align with the weight-sharding layout, such constraints can reduce both computation and communication costs relative to full Gram-matrix regularization in large-scale model-parallel training.

\section{Conclusion}
\label{sec:conclusion}

ORCA uses temporary early orthogonality regularization to accelerate LLM training, achieving lower validation loss. Its central insight is that optimizers can both choose better update directions and steer early training toward regions where loss is easier to reduce. We hope this perspective inspires future optimizer design.

\newpage

\bibliographystyle{plainnat}
\bibliography{references}

@inproceedings{li2018algorithmic,
  title={Algorithmic regularization in over-parameterized matrix sensing and neural networks with quadratic activations},
  author={Li, Yuanzhi and Ma, Tengyu and Zhang, Hongyang},
  booktitle={Conference on learning theory},
  pages={2--47},
  year={2018},
  organization={PMLR}
}

@article{touvron2023llama,
  title={Llama: Open and efficient foundation language models},
  author={Touvron, Hugo and Lavril, Thibaut and Izacard, Gautier and Martinet, Xavier and Lachaux, Marie-Anne and Lacroix, Timoth{\'e}e and Rozi{\`e}re, Baptiste and Goyal, Naman and Hambro, Eric and Azhar, Faisal and others},
  journal={arXiv preprint arXiv:2302.13971},
  year={2023}
}

@article{yang2025qwen3,
  title={Qwen3 technical report},
  author={Yang, An and Li, Anfeng and Yang, Baosong and Zhang, Beichen and Hui, Binyuan and Zheng, Bo and Yu, Bowen and Gao, Chang and Huang, Chengen and Lv, Chenxu and others},
  journal={arXiv preprint arXiv:2505.09388},
  year={2025}
}

@article{liu2025muon,
  title={Muon is scalable for llm training},
  author={Liu, Jingyuan and Su, Jianlin and Yao, Xingcheng and Jiang, Zhejun and Lai, Guokun and Du, Yulun and Qin, Yidao and Xu, Weixin and Lu, Enzhe and Yan, Junjie and others},
  journal={arXiv preprint arXiv:2502.16982},
  year={2025}
}

@article{clark2018think,
  title={Think you have solved question answering? try arc, the ai2 reasoning challenge},
  author={Clark, Peter and Cowhey, Isaac and Etzioni, Oren and Khot, Tushar and Sabharwal, Ashish and Schoenick, Carissa and Tafjord, Oyvind},
  journal={arXiv preprint arXiv:1803.05457},
  year={2018}
}

@inproceedings{zellers2019hellaswag,
  title={Hellaswag: Can a machine really finish your sentence?},
  author={Zellers, Rowan and Holtzman, Ari and Bisk, Yonatan and Farhadi, Ali and Choi, Yejin},
  booktitle={Proceedings of the 57th annual meeting of the association for computational linguistics},
  pages={4791--4800},
  year={2019}
}

@article{dong2026muon,
  title={Muon{$^{p}$}: Muon with Fractional Spectral Powers},
  author={Dong, Yihe and Sawin, Will},
  journal={arXiv preprint arXiv:2606.13867},
  year={2026}
}

@article{qiu2026reparameterized,
  title={Reparameterized llm training via orthogonal equivalence transformation},
  author={Qiu, Zeju and Buchholz, Simon and Xiao, Tim and Dax, Maximilian and Sch{\"o}lkopf, Bernhard and Liu, Weiyang},
  journal={Advances in Neural Information Processing Systems},
  volume={38},
  pages={140775--140821},
  year={2026}
}

@article{kong2022momentum,
  title={Momentum stiefel optimizer, with applications to suitably-orthogonal attention, and optimal transport},
  author={Kong, Lingkai and Wang, Yuqing and Tao, Molei},
  journal={arXiv preprint arXiv:2205.14173},
  year={2022}
}

@article{gunasekar2017implicit,
  title={Implicit regularization in matrix factorization},
  author={Gunasekar, Suriya and Woodworth, Blake E and Bhojanapalli, Srinadh and Neyshabur, Behnam and Srebro, Nati},
  journal={Advances in neural information processing systems},
  volume={30},
  year={2017}
}

@article{qiu2026poet,
  title={POET-X: Memory-efficient LLM training by scaling orthogonal transformation},
  author={Qiu, Zeju and Liu, Lixin and Weller, Adrian and Shi, Han and Liu, Weiyang},
  journal={arXiv preprint arXiv:2603.05500},
  year={2026}
}

@inproceedings{dai2024deepseekmoe,
  title={Deepseekmoe: Towards ultimate expert specialization in mixture-of-experts language models},
  author={Dai, Damai and Deng, Chengqi and Zhao, Chenggang and Xu, RX and Gao, Huazuo and Chen, Deli and Li, Jiashi and Zeng, Wangding and Yu, Xingkai and Wu, Yu and others},
  booktitle={Proceedings of the 62nd annual meeting of the association for computational linguistics (volume 1: Long papers)},
  pages={1280--1297},
  year={2024}
}

@article{loshchilov2016sgdr,
  title={Sgdr: Stochastic gradient descent with warm restarts},
  author={Loshchilov, Ilya and Hutter, Frank},
  journal={arXiv preprint arXiv:1608.03983},
  year={2016}
}

@article{wang2026pc,
  title={PC Layer: Polynomial Weight Preconditioning for Improving LLM Pre-Training},
  author={Wang, Senmiao and Fang, Tiantian and Zhang, Haoran and Zhang, Yushun and Zhao, Kunxiang and Schwing, Alex and Sun, Ruoyu},
  journal={arXiv preprint arXiv:2606.06470},
  year={2026}
}

@article{peyre2026muon,
  title={Muon Dynamics as a Spectral Wasserstein Flow},
  author={Peyr{\'e}, Gabriel},
  journal={arXiv preprint arXiv:2604.04891},
  year={2026}
}

@book{ambrosio2005gradient,
  title={Gradient flows: in metric spaces and in the space of probability measures},
  author={Ambrosio, Luigi and Gigli, Nicola and Savar{\'e}, Giuseppe},
  year={2005},
  publisher={Springer}
}

@article{sakaguchi2021winogrande,
  title={Winogrande: An adversarial winograd schema challenge at scale},
  author={Sakaguchi, Keisuke and Bras, Ronan Le and Bhagavatula, Chandra and Choi, Yejin},
  journal={Communications of the ACM},
  volume={64},
  number={9},
  pages={99--106},
  year={2021},
  publisher={ACM New York, NY, USA}
}

@article{gao2020pile,
  title={The pile: An 800gb dataset of diverse text for language modeling},
  author={Gao, Leo and Biderman, Stella and Black, Sid and Golding, Laurence and Hoppe, Travis and Foster, Charles and Phang, Jason and He, Horace and Thite, Anish and Nabeshima, Noa and others},
  journal={arXiv preprint arXiv:2101.00027},
  year={2020}
}

@article{hu2024minicpm,
  title={Minicpm: Unveiling the potential of small language models with scalable training strategies},
  author={Hu, Shengding and Tu, Yuge and Han, Xu and He, Chaoqun and Cui, Ganqu and Long, Xiang and Zheng, Zhi and Fang, Yewei and Huang, Yuxiang and Zhao, Weilin and others},
  journal={arXiv preprint arXiv:2404.06395},
  year={2024}
}

@article{loshchilov2017decoupled,
  title={Decoupled weight decay regularization},
  author={Loshchilov, Ilya and Hutter, Frank},
  journal={arXiv preprint arXiv:1711.05101},
  year={2017}
}

@article{li2024datacomp,
  title={Datacomp-lm: In search of the next generation of training sets for language models},
  author={Li, Jeffrey and Fang, Alex and Smyrnis, Georgios and Ivgi, Maor and Jordan, Matt and Gadre, Samir and Bansal, Hritik and Guha, Etash and Keh, Sedrick and Arora, Kushal and others},
  journal={Advances in Neural Information Processing Systems},
  volume={37},
  pages={14200--14282},
  year={2024}
}

@article{zhu2026iso,
  title={ISO: An RLVR-Native Optimization Stack},
  author={Zhu, Hanqing and Cong, Wenyan and Sha, Zhizhou and Mukherjee, Sagnik and Song, Xinyuan and Gonz{\'a}lez-Mart{\'\i}nez, David and Wu, Xiaoxia and Tian, Yuandong and Liu, Shiwei and Pan, David Z and others},
  journal={arXiv preprint arXiv:2607.19331},
  year={2026}
}

@inproceedings{ainslie2023gqa,
  title={Gqa: Training generalized multi-query transformer models from multi-head checkpoints},
  author={Ainslie, Joshua and Lee-Thorp, James and De Jong, Michiel and Zemlyanskiy, Yury and Lebr{\'o}n, Federico and Sanghai, Sumit},
  booktitle={Proceedings of the 2023 conference on empirical methods in natural language processing},
  pages={4895--4901},
  year={2023}
}

@article{shi2026pion,
  title={Pion: A spectrum-preserving optimizer via orthogonal equivalence transformation},
  author={Shi, Kexuan and Li, Hanxuan and Qiu, Zeju and Wen, Yandong and Buchholz, Simon and Liu, Weiyang},
  journal={arXiv preprint arXiv:2605.12492},
  year={2026}
}

@article{olmo20242,
  title={2 OLMo 2 Furious},
  author={OLMo, Team and Walsh, Pete and Soldaini, Luca and Groeneveld, Dirk and Lo, Kyle and Arora, Shane and Bhagia, Akshita and Gu, Yuling and Huang, Shengyi and Jordan, Matt and others},
  journal={arXiv preprint arXiv:2501.00656},
  year={2024}
}

@inproceedings{bisk2020piqa,
  title={Piqa: Reasoning about physical commonsense in natural language},
  author={Bisk, Yonatan and Zellers, Rowan and Gao, Jianfeng and Choi, Yejin and others},
  booktitle={Proceedings of the AAAI conference on artificial intelligence},
  volume={34},
  number={05},
  pages={7432--7439},
  year={2020}
}

@inproceedings{vyas2025soap,
  title={SOAP: Improving and stabilizing shampoo using adam for language modeling},
  author={Vyas, Nikhil and Morwani, Depen and Zhao, Rosie and Shapira, Itai and Brandfonbrener, David and Janson, Lucas and Kakade, Sham},
  booktitle={International Conference on Learning Representations},
  volume={2025},
  pages={93423--93444},
  year={2025}
}

@article{jordan2024muon,
  title={Muon: An optimizer for hidden layers in neural networks, 2024},
  author={Jordan, Keller and Jin, Yuchen and Boza, Vlado and You, Jiacheng and Cesista, Franz and Newhouse, Laker and Bernstein, Jeremy},
  journal={URL https://kellerjordan. github. io/posts/muon},
  volume={6},
  number={3},
  pages={4},
  year={2024}
}

\newpage

\appendix
\crefalias{section}{appendix}
\crefalias{subsection}{appendix}

\section{Orthogonal and spectral training.}
\citet{kong2022momentum} developed a momentum optimizer that maintains orthogonality constraints. For LLMs, POET and POET-X preserve weight spectra through orthogonal reparameterization, while Pion preserves them through optimizer updates \citep{qiu2026reparameterized,qiu2026poet,shi2026pion}. ISO applies fixed-spectrum optimization to post-training \citep{zhu2026iso}. These methods report impressive training stability and downstream or post-training results; Pion and ISO outperform Muon in several downstream or post-training evaluations. However, persistent spectral constraints have not yet been shown to consistently achieve lower LLM pretraining loss than Muon. Similar attention to weight spectra appears at the model level: the PC layer improves convergence by carefully reparameterizing weights to reshape their spectra \citep{wang2026pc}. Optimizer-side shaping may be more broadly applicable and less costly because it does not modify the model's forward or backward passes. One key difference in our method is that we remove the orthogonal constraint entirely after early training. We believe this freedom later in training is an important reason for our larger improvement over Muon.

\section{Experiment Setup}
\label{sec:settings}

This appendix summarizes the standard pretraining configurations used in the
main experiments. We omit the controlled ablations and optimizer-specific
interventions discussed in the main text. Unless stated otherwise, all runs
use a decoder-only Transformer with RoPE, SwiGLU feed-forward blocks, RMSNorm,
and bias-free linear layers. The matrix parameters use Muon or ORCA, while
embeddings, the language-model head, and one-dimensional parameters follow the
AdamW fallback described in the method section.

\subsection{Model and Dataset}

We train LLaMA-style models on C4, Qwen3-style models on The Pile, and the
fine-grained MoE models on the olmo-mix-1124 mixture \citep{gao2020pile, olmo20242}. The corresponding
tokenizers are the T5 SentencePiece Unigram tokenizer (vocabulary size
$32{,}100$), the Qwen3 tokenizer (vocabulary size $151{,}936$), and the OLMo
tokenizer (vocabulary size $100{,}278$).

For the LLaMA models, attention is multi-head with equal numbers of query and
key/value heads, the RoPE base is $10{,}000$, and the RMSNorm epsilon is
$10^{-6}$. The model configurations used in the paper are:

\begin{table}[t]
  \centering
  \small
  \caption{LLaMA configurations used in the standard experiments.}
  \label{tab:settings-llama}
  \begin{tabular}{lrrrrr}
    \toprule
    Model & Layers & Hidden & FFN & Q/KV heads & Head dim \\
    \midrule
    130M & 12 & 768  & 2048 & 12/12 & 64 \\
    350M & 24 & 1024 & 2736 & 16/16 & 64 \\
    1.3B & 24 & 2048 & 5461 & 32/32 & 64 \\
    \bottomrule
  \end{tabular}
\end{table}

Qwen3 uses grouped-query attention with head dimension 128 \citep{ainslie2023gqa}, a RoPE base of
$10^6$, RMSNorm epsilon $10^{-6}$, and the default QK normalization. The
MoE models also use grouped-query attention. Their router uses sigmoid scores,
top-$4$ routing, an auxiliary loss coefficient of $10^{-3}$, expert bias, a
top-$k$ scaling factor of 2, and one shared expert per layer. The first layer
is dense and the remaining layers are MoE layers.

\begin{table}[t]
  \centering
  \small
  \caption{Qwen3 and fine-grained MoE configurations.}
  \label{tab:settings-qwen-moe}
  \begin{tabular}{lrrrrrr}
    \toprule
    Model & Layers & Hidden & Dense FFN & Q/KV heads & Experts & Top-$k$ \\
    \midrule
    Qwen3 160M & 20 & 512  & 1536 & 8/4  & -- & -- \\
    Qwen3 0.6B & 28 & 1024 & 3072 & 16/8 & -- & -- \\
    MoE 0.8B & 16 & 768  & 1536 & 6/3  & 64 & 4 \\
    MoE 8B & 27 & 2048 & 4096 & 16/8 & 64 & 4 \\
    \bottomrule
  \end{tabular}
\end{table}

For the MoE models, the expert and shared-expert FFN widths are 288 for the
0.8B model and 768 for the 8B model. The MoE models use a RoPE base of
$10^6$, maximum context length 40960, and RMSNorm epsilon $10^{-6}$; their
input embeddings and output heads are untied. Qwen3 ties its input and output
embeddings. The active parameter counts of the two MoE models are 0.24B and
1B, respectively.

\subsection{Training Setup}

All standard runs use a 1,000-step linear warmup followed by a WSD schedule.
The final 15\% of the training steps is a linear decay from the peak learning
rate to a minimum learning rate equal to one tenth of the peak. Thus, the
middle portion of training uses the peak learning rate. The cosine schedule
appears only in the omitted schedule-control ablation.

Table~\ref{tab:settings-budget} reports the exact number of processed tokens
implied by the global batch size, sequence length, and plotted training steps.
The figure titles use rounded budgets (25B, 50B, or 100B tokens), whereas the
values in the table are not rounded.

\begin{table}[t]
  \centering
  \scriptsize
  \resizebox{\linewidth}{!}{%
  \begin{tabular}{l l r r r r r r}
    \toprule
    Model & Data & Seq. len. & Global batch & Steps & Tokens & Peak LR & WD \\
    \midrule
    LLaMA 130M & C4 & 1024 & 512  & 50,000  & 26.2144B  & searched & 0.01 \\
    LLaMA 350M & C4 & 1024 & 1024 & 100,000 & 104.8576B & $1\!\times\!10^{-3}$ & 0.1 \\
    LLaMA 1.3B & C4 & 1024 & 1024 & 100,000 & 104.8576B & $6\!\times\!10^{-4}$ & 0.1 \\
    Qwen3 160M & The Pile & 4096 & 256 & 25,000 & 26.2144B & $1\!\times\!10^{-3}$ & 0.01 \\
    Qwen3 0.6B & The Pile & 4096 & 512 & 50,000 & 104.8576B & $6\!\times\!10^{-4}$ & 0.1 \\
    MoE 0.8B & olmo-mix-1124 & 1024 & 512 & 50,000 & 26.2144B & $1\!\times\!10^{-3}$ & 0.1 \\
    MoE 0.8B & olmo-mix-1124 & 1024 & 512 & 100,000 & 52.4288B & $1\!\times\!10^{-3}$ & 0.1 \\
    MoE 8B & olmo-mix-1124 & 2048 & 1024 & 50,000 & 104.8576B & $4\!\times\!10^{-4}$ & 0.1 \\
    \bottomrule
  \end{tabular}}
  \caption{Training budgets for the standard experiments. Tokens are computed
  as sequence length times global batch size times optimizer steps.}
  \label{tab:settings-budget}
\end{table}

For standard ORCA runs, the shaping strength is $\gamma=1$ and the active
window is the first $25\%$ of training. In the learning-rate sweep, the Muon
and ORCA matrix updates are additionally rescaled by
$0.2\,\mathbf{O}_t/\operatorname{Rms}(\mathbf{O}_t)$, following the matched
update normalization used in the main text. For the LLaMA 130M learning-rate
sweep, the selected peak learning rates are
$1.3\times10^{-3}$ for Adam, $3\times10^{-3}$ for Muon,
$1\times10^{-3}$ for Pion, and $2\times10^{-3}$ for ORCA. The remaining
models use the peak rates listed in Table~\ref{tab:settings-budget}.

\subsection{Implementation Details}

All standard Muon and ORCA runs use eight Newton--Schulz iterations with the
Polar Express coefficients and Nesterov momentum of $0.95$. The ORCA shaping
window is released layerwise for the MoE experiments: the first layer is
released at 20\% of training, the last layer at 60\%, and intermediate layers
use linear interpolation. The MoE 8B experiment compares Muon and ORCA under
the available budget; the 0.8B experiments additionally include AdamW as shown
in the main figure.

\section{More Experiment Results}
\label{app:more-experiment-results}

\subsection{Robustness across random seeds}
\label{app:multiseed-130m}

We repeated the LLaMA 130M experiment
for four random seeds under the same 50k-step training protocol. Table~\ref{tab:multiseed-130m}
reports the final validation loss for each seed together with the mean and the
sample variance across seeds. ORCA achieves the lowest mean loss and the smallest
across-seed variance. Relative to Muon, ORCA lowers the mean final loss by
$0.0387$ and reduces the sample variance by approximately an order of magnitude.
The ordering ORCA $<$ Muon $<$ Adam is preserved for every seed, indicating that
the observed improvement is not tied to a particular initialization.

\begin{table}[t]
  \centering
  \caption{Final validation loss across four random seeds for LLaMA 130M at
  50k steps. The final two columns report the mean and sample variance across
  seeds; summary statistics are computed from the displayed values.}
  \label{tab:multiseed-130m}
  \scriptsize
  \setlength{\tabcolsep}{4pt}
  \begin{tabular}{@{}lrrrrrr@{}}
    \toprule
    Method & Seed 0 & Seed 1 & Seed 2 & Seed 3 & Mean & Variance \\
    \midrule
    Adam & 2.882 & 2.880 & 2.877 & 2.884 & 2.8808 & $8.92\times10^{-6}$ \\
    Muon & 2.846 & 2.851 & 2.852 & 2.853 & 2.8505 & $9.67\times10^{-6}$ \\
    ORCA & 2.811 & 2.813 & 2.812 & 2.811 & $\mathbf{2.8118}$ & $\mathbf{9.17\times10^{-7}}$ \\
    \bottomrule
  \end{tabular}
  \par\smallskip
  {\footnotesize Variance is the sample variance over the four seeds.}
\end{table}

\subsection{Optimizer-State and Learning-Rate Controls}
\label{app:controls}

We test whether ORCA's gain depends on optimizer state carried across release
or on the WSD learning-rate schedule \citep{hu2024minicpm, loshchilov2016sgdr}. \Cref{tab:controls} reports the final
validation loss at 50k steps.

\begin{table}[t]
  \centering
  \caption{Optimizer-state and learning-rate-schedule controls.}
  \label{tab:controls}
  \resizebox{\textwidth}{!}{%
  \begin{tabular}{lllccc}
    \toprule
    Question & Reference setting & Variant setting
      & Reference loss $\downarrow$ & Variant loss $\downarrow$
      & Variant $-$ Reference \\
    \midrule
    Optimizer state & ORCA, state retained & ORCA, state reset
      & 2.8107 & 2.8106 & $-0.0001$ \\
    Schedule: WSD & Muon & ORCA & 2.8512 & 2.8106 & $-0.0406$ \\
    Schedule: cosine & Muon & ORCA & 2.8595 & 2.8139 & $-0.0456$ \\
    \bottomrule
  \end{tabular}}
\end{table}

\textbf{The gain of ORCA does not come from carrying over the optimizer state.}
At the end of the active window we keep the model parameters unchanged and
reinitialize only the Muon optimizer state, then compare against a matched run
that retains that state. The final validation losses at 50k steps are $2.8106$
with the state reset and $2.8107$ with it retained, a difference of
$1\times10^{-4}$. Resetting the optimizer state does not remove the advantage of
ORCA over Muon, so that advantage cannot be attributed to the optimizer buffers
inherited at release.

\textbf{The gain of ORCA does not depend on a particular learning-rate
schedule.} Under both the WSD and the cosine schedule, ORCA is better than the
corresponding Muon run, with a final validation loss lower by $0.0406$ and
$0.0456$ respectively. The advantage of ORCA therefore does not depend on the
specific decay shape of WSD.

\section{Proof of the Path-Dependent Loss Lower Bound}
\label{app:path}

\subsection{The Exact-Polar Muon Continuous-Time Flow}
\label{app:polar-flow}

For any matrix $\mathbf{A}$, let $\mathbf{Q}(\mathbf{A})$ denote a polar direction satisfying
\begin{equation}
  \ip{\mathbf{A}}{\mathbf{Q}(\mathbf{A})}=\Nnorm{\mathbf{A}},
  \qquad
  \Snorm{\mathbf{Q}(\mathbf{A})}\le1.
  \label{eq:polar-def}
\end{equation}
If $\mathbf{A}=\mathbf{U}\mathbf{\Sigma} \mathbf{V}^\top$ is a thin singular value decomposition, one may take
$\mathbf{Q}(\mathbf{A})=\mathbf{U}\mathbf{V}^\top$; in the rank-deficient case, an element satisfying
\eqref{eq:polar-def} can be selected from the subdifferential of the nuclear
norm. The argument below uses this defining identity directly for any valid
choice of polar direction.

Let $\mathbf{G}_t=\nabla_{\mathbf{W}}L(\mathbf{X}_t\mathbf{Y}_t)$. The exact-polar Muon continuous-time flow with
per-factor scheduling is defined by requiring, almost everywhere,
\begin{equation}
  \dot{\mathbf{X}}_t=-\eta_t^X\mathbf{Q}(\mathbf{G}_t\mathbf{Y}_t^\top),
  \qquad
  \dot{\mathbf{Y}}_t=-\eta_t^Y\mathbf{Q}(\mathbf{X}_t^\top \mathbf{G}_t).
  \label{eq:polar-flow}
\end{equation}
We assume that $k\ge2$, $(\mathbf{X}_t,\mathbf{Y}_t)$ is absolutely continuous on $[0,T]$,
and that the solution exists with $\eta_0\ge0$ and
$\Snorm{\mathbf{X}_t},\Snorm{\mathbf{Y}_t}>0$. The per-factor step-size
bounds are the same as those in Theorem~\ref{thm:mechanistic}:
\[
  0\le\eta_t^X\le\frac{\eta_0}{\Snorm{\mathbf{Y}_t}},
  \qquad
  0\le\eta_t^Y\le\frac{\eta_0}{\Snorm{\mathbf{X}_t}}.
\]
The common step-size condition
\begin{equation}
  \eta_t^X=\eta_t^Y=\eta_t
  \quad\text{with}\quad
  0\le\eta_t\le\frac{\eta_0}{\max\{\Snorm{\mathbf{X}_t},\Snorm{\mathbf{Y}_t}\}}
  \label{eq:common-stepsize}
\end{equation}
is a special case.

\subsection{Two Auxiliary Lemmas}
\label{app:lemmas}

\begin{lemma}[Nuclear norm of a product]
\label{lem:nuclear-product}
For any conformable matrices $\mathbf{A},\mathbf{B}$,
\begin{equation}
  \Nnorm{\mathbf{A}\mathbf{B}}\le\Fnorm{\mathbf{A}}\Fnorm{\mathbf{B}}.
  \label{eq:nuclear-product}
\end{equation}
\end{lemma}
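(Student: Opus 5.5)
Let $\sigma_i(\cdot)$ denote singular values. The plan is to reduce the inequality $\Nnorm{\mathbf{A}\mathbf{B}}\le\Fnorm{\mathbf{A}}\Fnorm{\mathbf{B}}$ to Cauchy--Schwarz in the Frobenius inner product. This uses the variational (dual) characterization of the nuclear norm: for any matrix $\mathbf{M}$,
\[
\Nnorm{\mathbf{M}}=\max_{\Snorm{\mathbf{Q}}\le1}\ip{\mathbf{M}}{\mathbf{Q}}.
\]
The maximum is attained at $\mathbf{Q}=\mathbf{U}\mathbf{V}^\top$ for a thin SVD $\mathbf{M}=\mathbf{U}\mathbf{\Sigma}\mathbf{V}^\top$. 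This is exactly the polar direction in \eqref{eq:polar-def}.

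\textbf{Step 1.} Fix $\mathbf{Q}=\mathbf{U}\mathbf{V}^\top$ from a thin SVD of $\mathbf{A}\mathbf{B}$. Then
\[
\Nnorm{\mathbf{A}\mathbf{B}}=\tr(\mathbf{Q}^\top\mathbf{A}\mathbf{B})=\ip{\mathbf{A}^\top\mathbf{Q}}{\mathbf{B}}
\]
by cyclicity of the trace. This needs only the identity $\tr(\mathbf{V}\mathbf{U}^\top\mathbf{U}\mathbf{\Sigma}\mathbf{V}^\top)=\tr\mathbf{\Sigma}$, so the dual characterization is not even required.

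\textbf{Step 2.} Apply Cauchy--Schwarz to the Frobenius inner product:
\[
\ip{\mathbf{A}^\top\mathbf{Q}}{\mathbf{B}}\le\Fnorm{\mathbf{A}^\top\mathbf{Q}}\Fnorm{\mathbf{B}}.
\]

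\textbf{Step 3.} Bound $\Fnorm{\mathbf{A}^\top\mathbf{Q}}\le\Snorm{\mathbf{Q}}\Fnorm{\mathbf{A}}\le\Fnorm{\mathbf{A}}$. This follows from the standard inequality $\Fnorm{\mathbf{M}\mathbf{N}}\le\Fnorm{\mathbf{M}}\Snorm{\mathbf{N}}$, which can be checked row by row: $\|\mathbf{m}_i^\top\mathbf{N}\|_2\le\|\mathbf{m}_i\|_2\Snorm{\mathbf{N}}$, then square and sum over rows. Since $\mathbf{Q}=\mathbf{U}\mathbf{V}^\top$ has orthonormal columns in the thin SVD, we have $\Snorm{\mathbf{Q}}\le1$. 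Combining the three steps gives \eqref{eq:nuclear-product}.

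The only point requiring care is the degenerate case $\mathbf{A}\mathbf{B}=0$, where the inequality is trivial. Rank deficiency causes no problem, because the thin SVD keeps only the nonzero singular values, so no subdifferential selection is needed. I do not expect a real obstacle. The one step to state cleanly is the operator/Frobenius mixed inequality in Step 3.

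An alternative route, for cross-checking, is Horn's inequality $\sigma_i(\mathbf{A}\mathbf{B})\le\ldots$ together with von Neumann's trace inequality, which gives
\[
\Nnorm{\mathbf{A}\mathbf{B}}\le\sum_i\sigma_i(\mathbf{A})\sigma_i(\mathbf{B})\le\Fnorm{\mathbf{A}}\Fnorm{\mathbf{B}}
\]
by Cauchy--Schwarz on the singular-value vectors. I prefer the first route because it is self-contained.
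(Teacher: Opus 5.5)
Your proof is correct and is essentially the paper's argument. The paper also pairs $\mathbf{A}\mathbf{B}$ against a matrix $\mathbf{M}$ with $\Snorm{\mathbf{M}}\le1$, shifts one factor across the Frobenius inner product, and applies Cauchy--Schwarz together with the mixed bound $\Fnorm{\mathbf{M}\mathbf{B}^\top}\le\Snorm{\mathbf{M}}\Fnorm{\mathbf{B}}$. The only cosmetic differences are that it takes the supremum over all such $\mathbf{M}$ instead of fixing the maximizer $\mathbf{U}\mathbf{V}^\top$, and it moves $\mathbf{B}$ rather than $\mathbf{A}$. One small wording fix: $\mathbf{U}\mathbf{V}^\top$ need not itself have orthonormal columns (only $\mathbf{U}$ and $\mathbf{V}$ do), but $\Snorm{\mathbf{U}\mathbf{V}^\top}\le\Snorm{\mathbf{U}}\Snorm{\mathbf{V}}=1$ gives exactly what you need.
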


\begin{proof}
By the duality between the nuclear and spectral norms, for any $\Snorm{\mathbf{M}}\le1$,
\begin{equation*}
  \ip{\mathbf{A}\mathbf{B}}{\mathbf{M}}
  =\ip{\mathbf{A}}{\mathbf{M}\mathbf{B}^\top}
  \le\Fnorm{\mathbf{A}}\Fnorm{\mathbf{M}\mathbf{B}^\top}
  \le\Fnorm{\mathbf{A}}\Snorm{\mathbf{M}}\Fnorm{\mathbf{B}}
  \le\Fnorm{\mathbf{A}}\Fnorm{\mathbf{B}}.
\end{equation*}
Taking the supremum over $\mathbf{M}$ gives \eqref{eq:nuclear-product}.
\end{proof}

\begin{lemma}[Gradient bound for the cross-entropy objective]
\label{lem:ce-gradient}
In the setting of \eqref{eq:objective}, if $\mathbf{G}=\nabla_{\mathbf{W}}L(\mathbf{W})$ then
\begin{equation}
  \Fnorm{\mathbf{G}}\le\sqrt2\,L(\mathbf{W}).
  \label{eq:ce-gradient}
\end{equation}
\end{lemma}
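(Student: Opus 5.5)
The plan is to work column by column and reduce everything to a per-column bound. Write $\mathbf W=\mathbf X\mathbf Y$ and let $\mathbf w_i$ be its $i$-th column. The loss decouples as $L(\mathbf W)=\sum_i \ell_i(\mathbf w_i)$, where
\[
\ell_i(\mathbf w)=\log\sum_j e^{w_j}-w_{c_i}.
\]
The gradient column is $\mathbf g_i=\mathbf p_i-\mathbf e_{c_i}$, with $\mathbf p_i=\operatorname{softmax}(\mathbf w_i)$. It follows that $\Fnorm{\mathbf G}^2=\sum_i\|\mathbf p_i-\mathbf e_{c_i}\|_2^2$. Since $\Fnorm{\mathbf G}\le\sum_i\|\mathbf g_i\|_2$ and $L=\sum_i\ell_i$ with every $\ell_i\ge 0$, it suffices to prove the per-column inequality $\|\mathbf p-\mathbf e_c\|_2\le\sqrt2\,\ell$ for a single probability vector $\mathbf p$ and $\ell=-\log p_c$.

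For the per-column bound, write $q=p_c\in(0,1]$. The squared norm is
\[
\|\mathbf p-\mathbf e_c\|_2^2=(1-q)^2+\sum_{j\ne c}p_j^2\le(1-q)^2+\Bigl(\sum_{j\ne c}p_j\Bigr)^2=2(1-q)^2 .
\]
Hence $\|\mathbf p-\mathbf e_c\|_2\le\sqrt2(1-q)$. Finally, the elementary inequality $1-q\le-\log q$ (concavity of $\log$, or $\log x\le x-1$) gives $\|\mathbf g_i\|_2\le\sqrt2\,\ell_i$. Summing over $i$ and using the triangle inequality for the Frobenius norm across columns, via $\sqrt{\sum a_i^2}\le\sum a_i$, yields $\Fnorm{\mathbf G}\le\sqrt2\sum_i\ell_i=\sqrt2\,L(\mathbf W)$.

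There is no real obstacle here. The only place needing care is bounding $\sum_{j\ne c}p_j^2$ by $(1-q)^2$ rather than by something that loses a factor of $k$; the inequality $\sum a_j^2\le(\sum a_j)^2$ for nonnegative $a_j$ handles it. The step from $\ell_2$ column norms to the Frobenius norm is also loose, but it is in the safe direction, so the stated constant $\sqrt2$ follows without dependence on $k$.
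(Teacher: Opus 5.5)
Your proof is correct and follows essentially the same route as the paper: the column gradient $\mathbf p_i-\mathbf e_{c_i}$, the bound $\sum_{j\ne c}p_j^2\le(1-p_c)^2$ giving $\|\mathbf g_i\|_2\le\sqrt2\,(1-p_{c_i})$, the inequality $1-q\le-\log q$, and finally $\sqrt{\sum_i a_i^2}\le\sum_i a_i$ for nonnegative terms. The only cosmetic difference is the order of steps: you apply that last inequality to the column norms before bounding them, whereas the paper bounds each column first and then uses $(\sum_i\ell_i^2)^{1/2}\le\sum_i\ell_i$.
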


\begin{proof}
Let $\mathbf{p}_i$ be the softmax vector of column $i$, so that the corresponding gradient
is $\mathbf{g}_i=\mathbf{p}_i-\mathbf{e}_{c_i}$. Then
\begin{equation*}
  \begin{aligned}
    \|\mathbf{g}_i\|_2^2
    &=(1-p_{c_i i})^2+\sum_{j\ne c_i}p_{ji}^2\\
    &\le(1-p_{c_i i})^2+\left(\sum_{j\ne c_i}p_{ji}\right)^2
    =2(1-p_{c_i i})^2.
  \end{aligned}
\end{equation*}
Since $p_{c_i i}=e^{-\ell_i}$ and $1-e^{-x}\le x$ for $x\ge0$, we get
$\|\mathbf{g}_i\|_2\le\sqrt2\,\ell_i$. Therefore
\begin{equation*}
  \Fnorm{\mathbf{G}}
  =\left(\sum_i\|\mathbf{g}_i\|_2^2\right)^{1/2}
  \le\sqrt2\left(\sum_i\ell_i^2\right)^{1/2}
  \le\sqrt2\sum_i\ell_i
  =\sqrt2L(\mathbf{W}),
\end{equation*}
which is \eqref{eq:ce-gradient}.
\end{proof}

\subsection{Proof of \texorpdfstring{\Cref{thm:mechanistic}}{Theorem 2.1}}
\label{app:path-proof}

Since $L(\mathbf{X}\mathbf{Y})$ is smooth in $(\mathbf{X},\mathbf{Y})$, the chain rule applies almost everywhere
along an absolutely continuous path:
\begin{equation}
  \dot L_t=\ip{\mathbf{G}_t\mathbf{Y}_t^\top}{\dot{\mathbf{X}}_t}+\ip{\mathbf{X}_t^\top \mathbf{G}_t}{\dot{\mathbf{Y}}_t}.
  \label{eq:chain-rule}
\end{equation}
Substituting \eqref{eq:polar-flow} and using the nuclear-norm identity of the
polar direction gives
\begin{equation}
  -\dot L_t=\eta_t^X\Nnorm{\mathbf{G}_t\mathbf{Y}_t^\top}+\eta_t^Y\Nnorm{\mathbf{X}_t^\top \mathbf{G}_t}\ge0.
  \label{eq:descent-identity}
\end{equation}
Applying \Cref{lem:nuclear-product} and the step-size condition to the first
term,
\begin{equation}
  \eta_t^X\Nnorm{\mathbf{G}_t\mathbf{Y}_t^\top}
  \le\frac{\eta_0}{\Snorm{\mathbf{Y}_t}}\Fnorm{\mathbf{G}_t}\Fnorm{\mathbf{Y}_t}
  =\eta_0\Fnorm{\mathbf{G}_t}\sqrt{\erank(\mathbf{Y}_t)}.
  \label{eq:bound-x}
\end{equation}
Similarly,
\begin{equation}
  \eta_t^Y\Nnorm{\mathbf{X}_t^\top \mathbf{G}_t}\le\eta_0\Fnorm{\mathbf{G}_t}\sqrt{\erank(\mathbf{X}_t)}.
  \label{eq:bound-y}
\end{equation}
Combining \eqref{eq:descent-identity}--\eqref{eq:bound-y} and then applying
\Cref{lem:ce-gradient},
\begin{equation}
  -\dot L_t\le\sqrt2\,\eta_0L_t
  \left(\sqrt{\erank(\mathbf{X}_t)}+\sqrt{\erank(\mathbf{Y}_t)}\right).
  \label{eq:gronwall}
\end{equation}
For finite matrix entries, \eqref{eq:objective} satisfies $L_t>0$, so $\log L_t$
is well defined. Dividing \eqref{eq:gronwall} by $L_t$ and integrating from $0$
to $T$,
\begin{equation*}
  \log\frac{L_T}{L_0}
  \ge-\sqrt2\,\eta_0\int_0^T
  \left(\sqrt{\erank(\mathbf{X}_s)}+\sqrt{\erank(\mathbf{Y}_s)}\right)ds.
\end{equation*}
Exponentiating gives the bound stated in \Cref{thm:mechanistic}.

\section{Proof of the Endpoint Loss Lower Bound}
\label{app:endpoint}

\subsection{A Linear Lower Bound on the Cross-Entropy Loss}
\label{app:linear-bound}

For $d\ge2$ and an index $c\in[d]$, define the unit vector
\begin{equation}
  \mathbf{v}_c:=\sqrt{\frac{d-1}{d}}\mathbf{e}_c-\frac{1}{\sqrt{d(d-1)}}\sum_{j\ne c}\mathbf{e}_j,
  \qquad
  \|\mathbf{v}_c\|_2=1.
  \label{eq:vc}
\end{equation}
Fix a single column $\mathbf{w}\in\Rbb^d$, whose loss is
\begin{equation}
  \ell_c(\mathbf{w})=\log\!\left(1+\sum_{j\ne c}e^{w_j-w_c}\right).
  \label{eq:column-loss}
\end{equation}
By the AM--GM inequality,
\begin{equation}
  \sum_{j\ne c}e^{w_j-w_c}
  \ge(d-1)\exp\!\left[-\left(w_c-\frac1{d-1}\sum_{j\ne c}w_j\right)\right].
  \label{eq:amgm}
\end{equation}
A direct computation gives
\begin{equation}
  w_c-\frac1{d-1}\sum_{j\ne c}w_j=\sqrt{\frac d{d-1}}\ip{\mathbf{v}_c}{\mathbf{w}}.
  \label{eq:margin-identity}
\end{equation}
For any $R>0$, write
\begin{equation}
  \beta_{d,R}
  :=R\sqrt{\frac d{d-1}}
  \frac{(d-1)e^{-R\sqrt{d/(d-1)}}}{1+(d-1)e^{-R\sqrt{d/(d-1)}}}>0,
  \label{eq:beta}
\end{equation}
and let $f(t)=\log(1+(d-1)e^{-t})$ with $t_0=R\sqrt{d/(d-1)}$. Since $f$ is
convex, its tangent at $t_0$ is a global lower bound. Combining this with
\eqref{eq:amgm}--\eqref{eq:margin-identity},
\begin{equation}
  \begin{aligned}
    \ell_c(\mathbf{w})
    &\ge f\!\left(\sqrt{\frac d{d-1}}\ip{\mathbf{v}_c}{\mathbf{w}}\right)\\
    &\ge f(t_0)+\frac{\beta_{d,R}}{R}\left(R-\ip{\mathbf{v}_c}{\mathbf{w}}\right)\\
    &\ge\beta_{d,R}-\frac{\beta_{d,R}}{R}\ip{\mathbf{v}_c}{\mathbf{w}},
  \end{aligned}
  \label{eq:linear-bound}
\end{equation}
where the last step drops the nonnegative term $f(t_0)$. The bound holds for
every $\mathbf{w}$, with $R$ selecting the tangent point.

\subsection{Proof of \texorpdfstring{\Cref{thm:endpoint}}{Theorem 2.2}}
\label{app:endpoint-proof}

For each target index $c$, write
\begin{equation}
  S_c=\{i:c_i=c\},
  \qquad
  m_c=|S_c|,
  \label{eq:groups}
\end{equation}
and let $\mathbf{1}_{S_c}\in\Rbb^n$ be the indicator vector of $S_c$. Summing
\eqref{eq:linear-bound} over $i\in S_c$ and applying the Cauchy--Schwarz
inequality,
\begin{equation}
  \begin{aligned}
    \sum_{i\in S_c}\ell_c(\mathbf{w}_i)
    &\ge\beta_{d,R}m_c
      -\frac{\beta_{d,R}}{R}\ip{\mathbf{v}_c}{\sum_{i\in S_c}\mathbf{w}_i}\\
    &\ge\beta_{d,R}\left(m_c-\frac{\Snorm{\mathbf{W}}}{R}\sqrt{m_c}\right),
  \end{aligned}
  \label{eq:group-bound}
\end{equation}
where we used
\begin{equation*}
  \left\|\sum_{i\in S_c}\mathbf{w}_i\right\|_2
  =\Snorm{\mathbf{W}\mathbf{1}_{S_c}}\le\Snorm{\mathbf{W}}\sqrt{m_c}.
\end{equation*}
Summing over all $c$,
\begin{equation}
  L(\mathbf{W})\ge\beta_{d,R}
  \left[n-\frac{\Snorm{\mathbf{W}}}{R}\sum_{c=1}^{d}\sqrt{m_c}\right].
  \label{eq:sum-bound}
\end{equation}
Now take $R=\Fnorm{\mathbf{W}}/\sqrt n$ and write $C_d(\mathbf{W}):=\beta_{d,R}$. Under the
bounded-scale assumption in \Cref{thm:endpoint}, $R\le\Snorm{\mathbf W}\le B=O(1)$,
so the tangent scale used in the bound remains in the prescribed $O(1)$ regime. Since only $K$ of
the $m_c$ are nonzero, the Cauchy--Schwarz inequality gives
\begin{equation}
  \sum_{c=1}^{d}\sqrt{m_c}\le\sqrt{K\sum_c m_c}=\sqrt{Kn},
  \label{eq:cs-groups}
\end{equation}
while
\begin{equation}
  \frac{\Snorm{\mathbf{W}}}{R}=\frac{\sqrt n\Snorm{\mathbf{W}}}{\Fnorm{\mathbf{W}}}
  =\sqrt{\frac n{\erank(\mathbf{W})}}.
  \label{eq:ratio-erank}
\end{equation}
Substituting \eqref{eq:cs-groups}--\eqref{eq:ratio-erank} into
\eqref{eq:sum-bound} yields
\begin{equation}
  L(\mathbf{W})\ge C_d(\mathbf{W})\,n\left(1-\sqrt{\frac{K}{\erank(\mathbf{W})}}\right),
  \label{eq:product-bound-app}
\end{equation}
For the square setting of \Cref{thm:endpoint}, set $d=n=k$; this is exactly
the claimed endpoint bound. In the unit-scale setting relevant to the
discussion, $\Snorm{\mathbf W}=1$ gives
$R=\Fnorm{\mathbf W}/\sqrt{k}=\sqrt{\erank(\mathbf W)/k}\in[1/\sqrt{k},1]$.
Since $R\mapsto\beta_{k,R}$ is continuous and strictly positive on this
interval, $\underline C_k:=\min_{R\in[1/\sqrt{k},1]}\beta_{k,R}>0$ and
$C_k(\mathbf W)\ge\underline C_k$. Consequently, for any $\delta>0$, the
regime $\erank(\mathbf W)\ge K+\delta$ has the uniform lower bound
\[
L(\mathbf W)\ge \underline C_k\,k
\left(1-\sqrt{\frac{K}{K+\delta}}\right)>0.
\]

\subsection{How the Factor Effective Ranks Control the Product}
\label{app:factor-product}

\begin{lemma}[Two-factor product]
\label{lem:factor-product}
For any nonzero $\mathbf{X}\in\Rbb^{d\times k}$ and $\mathbf{Y}\in\Rbb^{k\times n}$ with $\mathbf{X}\mathbf{Y}\ne0$,
\begin{equation}
  \erank(\mathbf{X}\mathbf{Y})\ge\erank(\mathbf{X})+\erank(\mathbf{Y})-k.
  \label{eq:factor-product-app}
\end{equation}
\end{lemma}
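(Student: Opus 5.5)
The plan is to normalize both factors so that their spectral norms equal one, and then reduce the inequality to a trace inequality for two positive semidefinite $k\times k$ matrices sandwiched between $0$ and $\mathbf I$.

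\textbf{Step 1: normalize.} Effective rank is invariant under nonzero scaling: $\erank(a\mathbf A)=\erank(\mathbf A)$ for $a\neq0$. Moreover, $\mathbf X\mathbf Y$ scales by $ab$ when $\mathbf X,\mathbf Y$ are scaled by $a,b$. So we may assume $\Snorm{\mathbf X}=\Snorm{\mathbf Y}=1$. Then
\[
\erank(\mathbf X)=\Fnorm{\mathbf X}^2,\qquad \erank(\mathbf Y)=\Fnorm{\mathbf Y}^2,\qquad \Snorm{\mathbf X\mathbf Y}\le 1.
\]
Since $\mathbf X\mathbf Y\neq 0$, this gives
\[
\erank(\mathbf X\mathbf Y)=\Fnorm{\mathbf X\mathbf Y}^2/\Snorm{\mathbf X\mathbf Y}^2\ge\Fnorm{\mathbf X\mathbf Y}^2 .
\]
It therefore suffices to show
\[
\Fnorm{\mathbf X\mathbf Y}^2\ge \Fnorm{\mathbf X}^2+\Fnorm{\mathbf Y}^2-k .
\]

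\textbf{Step 2: pass to $k\times k$ Gram matrices.} Set $\mathbf A=\mathbf X^\top\mathbf X$ and $\mathbf B=\mathbf Y\mathbf Y^\top$, both in $\Rbb^{k\times k}$. Under the normalization, $0\preceq\mathbf A\preceq\mathbf I$ and $0\preceq\mathbf B\preceq\mathbf I$. In these terms,
\[
\Fnorm{\mathbf X}^2=\tr\mathbf A,\qquad \Fnorm{\mathbf Y}^2=\tr\mathbf B,\qquad \Fnorm{\mathbf X\mathbf Y}^2=\tr(\mathbf Y^\top\mathbf A\mathbf Y)=\tr(\mathbf A\mathbf B).
\]
This is the point where the inner dimension $k$ enters: it is the size of these matrices, so $\tr\mathbf I=k$.

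\textbf{Step 3: the key inequality.} The target becomes $\tr(\mathbf A\mathbf B)-\tr\mathbf A-\tr\mathbf B+k\ge0$. The left-hand side equals $\tr\big((\mathbf I-\mathbf A)(\mathbf I-\mathbf B)\big)$. Both $\mathbf I-\mathbf A$ and $\mathbf I-\mathbf B$ are positive semidefinite. The trace of a product of two PSD matrices $\mathbf P,\mathbf Q$ is nonnegative, because $\tr(\mathbf P\mathbf Q)=\Fnorm{\mathbf P^{1/2}\mathbf Q^{1/2}}^2$. Hence the target holds, and chaining this with Step 1 gives \eqref{eq:factor-product-app}.

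The main obstacle is conceptual rather than technical: seeing that normalization turns $\erank$ into a squared Frobenius norm, and that the deficit $k$ is exactly the trace of $\mathbf I$ in the factorization $(\mathbf I-\mathbf A)(\mathbf I-\mathbf B)$. Once that is in place, every step is a one-line identity.
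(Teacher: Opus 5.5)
Your proposal is correct and is essentially the paper's proof. The paper normalizes the Gram matrices $\mathbf X^\top\mathbf X$ and $\mathbf Y\mathbf Y^\top$ by $\Snorm{\mathbf X}^2$ and $\Snorm{\mathbf Y}^2$ instead of rescaling the factors, then uses the same nonnegativity of $\tr\big[(\mathbf I-\widehat{\mathbf A})(\mathbf I-\widehat{\mathbf M})\big]$ and the bound $\Snorm{\mathbf X\mathbf Y}\le\Snorm{\mathbf X}\Snorm{\mathbf Y}$.
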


\begin{proof}
Let $\mathbf{A}=\mathbf{X}^\top \mathbf{X}$ and $\mathbf{M}=\mathbf{Y}\mathbf{Y}^\top$, normalized as
\begin{equation*}
  \widehat{\mathbf{A}}=\frac{\mathbf{A}}{\Snorm{\mathbf{X}}^2},
  \qquad
  \widehat{\mathbf{M}}=\frac{\mathbf{M}}{\Snorm{\mathbf{Y}}^2}.
\end{equation*}
Then $0\preceq\widehat{\mathbf{A}},\widehat{\mathbf{M}}\preceq \mathbf{I}$, and
\begin{equation*}
  \tr(\widehat{\mathbf{A}})=\erank(\mathbf{X}),
  \qquad
  \tr(\widehat{\mathbf{M}})=\erank(\mathbf{Y}).
\end{equation*}
Both $\mathbf{I}-\widehat{\mathbf{A}}$ and $\mathbf{I}-\widehat{\mathbf{M}}$ are positive semidefinite, and the trace
of a product of two positive semidefinite matrices is nonnegative, so
\begin{equation}
  0\le\tr\!\left[(\mathbf{I}-\widehat{\mathbf{A}})(\mathbf{I}-\widehat{\mathbf{M}})\right]
  =k-\erank(\mathbf{X})-\erank(\mathbf{Y})+\tr(\widehat{\mathbf{A}}\widehat{\mathbf{M}}).
  \label{eq:psd-trace}
\end{equation}
On the other hand,
\begin{equation*}
  \Fnorm{\mathbf{X}\mathbf{Y}}^2=\tr(\mathbf{A}\mathbf{M})=\Snorm{\mathbf{X}}^2\Snorm{\mathbf{Y}}^2\tr(\widehat{\mathbf{A}}\widehat{\mathbf{M}}),
\end{equation*}
and $\Snorm{\mathbf{X}\mathbf{Y}}^2\le\Snorm{\mathbf{X}}^2\Snorm{\mathbf{Y}}^2$. Therefore
\begin{equation*}
  \erank(\mathbf{X}\mathbf{Y})=\frac{\Fnorm{\mathbf{X}\mathbf{Y}}^2}{\Snorm{\mathbf{X}\mathbf{Y}}^2}
  \ge\tr(\widehat{\mathbf{A}}\widehat{\mathbf{M}})
  \ge\erank(\mathbf{X})+\erank(\mathbf{Y})-k,
\end{equation*}
where the last step uses \eqref{eq:psd-trace}.
\end{proof}

\end{document}